%% file: main.tex
\documentclass{article}

\usepackage[main, final]{neurips_2026}

\makeatletter
\renewcommand{\@noticestring}{}
\makeatother

\usepackage[utf8]{inputenc}
\usepackage[T1]{fontenc}
\usepackage{url}
\usepackage{booktabs}
\usepackage{amsfonts}
\usepackage{nicefrac}
\usepackage{microtype}
\usepackage{xcolor}
\usepackage{thmtools}
\usepackage{thm-restate}

\input{packages}

\input{macro}

 \title{Let's Ask Gauss: Improved One-Run Privacy Auditing}

\author{
Adya Agrawal$^{1}$ \quad
Yu Wei$^{1}$ \quad
Jaspal Singh$^{1,3}$ \quad
Malik Magdon-Ismail$^{2}$ \quad
Vassilis Zikas$^{1}$ \\
$^{1}$Georgia Institute of Technology \\
$^{2}$Rensselaer Polytechnic Institute \\
$^{3}$Purdue University
}

\begin{document}

\maketitle
\input{00-abstract.tex}
\input{01-introduction.tex}

\input{02-prelim.tex}
\input{03-related_works.tex}
\input{04-methodology.tex}
\input{06-convergence-analysis}

\input{07-confidence-intervals-gaussian-param-estimation}
\input{08-experiments}
\input{09-discussion}

\input{10-ack}
\newpage
\bibliographystyle{plain}                                           
\bibliography{references}  
\newpage
\appendix
\input{transcript-white-box}

\input{appendix-convergence}
\input{app-ablations}

\newpage

\end{document}

%% file: packages.tex
\usepackage{amssymb, amsfonts, amsmath, amsthm}
\usepackage{amssymb}
\usepackage[T1]{fontenc}
\usepackage{graphics, graphicx}
\graphicspath{{Figures/}}
\usepackage{amsthm}
\usepackage{subcaption}
\usepackage{longtable, multirow, multicol}
\usepackage{colortbl, color, xcolor}
\usepackage{fancyvrb}
\usepackage{enumerate, tasks}
\usepackage[colorlinks=true,linkcolor=magenta,citecolor=violet,pagebackref]{hyperref}
\usepackage{seqsplit}
\usepackage{tabularx}
\usepackage{tabu}
\usepackage{pgf}
\usepackage{pgfplots}
\pgfplotsset{compat=newest}
\usepgfplotslibrary{fillbetween}
\usepackage{todonotes}

\usepackage{orcidlink}

\usepackage{mathtools}
\usepackage{algorithm}
\usepackage[noend]{algpseudocode}
\usepackage{amsfonts}
\usepackage{braket}
\usepackage{cleveref}
\usepackage{float}
\usepackage{tikz-cd}
\usepackage{placeins}
\usepackage{caption}

\usepackage{booktabs}

\usepackage{enumitem}

\newtheorem{theorem}{Theorem}
\newtheorem{proposition}{Proposition}
\newtheorem{lemma}{Lemma}
\newtheorem{corollary}{Corollary}

\theoremstyle{definition}
\newtheorem{definition}{Definition}
\newtheorem{model}{Model}

%% file: macro.tex
\newcommand{\Real}{\mathbb{R}}

\newcommand{\DB}{D}

\newcommand{\twoNorm}[1]{{\left\Vert #1 \right\Vert}_2}

\newcommand{\clip}{\mathsf{clip}}

\newcommand{\rN}{N}
\newcommand{\rS}{S}

\newcommand{\rX}{X}
\newcommand{\rY}{Y}
\newcommand{\rZ}{Z}

\newcommand{\multiNormal}[2]{\mathcal{N}\left(#1, #2\right)}

\newcommand{\defin}{ \stackrel{\rm def}{=} }

\newcommand{\bits}{ \{0,1\} }

\newcommand{\pr}[1]{ \Pr\left[ #1 \right] }

\newcommand{\cM}{\mathcal{M}}

\newcommand{\cY}{\mathcal{Y}}
\newcommand{\cL}{\mathcal{L}}
\newcommand{\cS}{\mathcal{S}}

\newcommand{\Ex}[1]{\mathbf{Ex} \left[ #1 \right ]}

\newcommand{\Var}[1]{\mathbf{Var} \left[ #1 \right ]}

\newcommand{\Mech}{\cM}

\newcommand{\rSum}[3]{\sum\limits_{#1 = #2}^{#3}}
\newcommand{\rProd}[3]{\prod\limits_{#1 = #2}^{#3}}
\newcommand{\rExp}[1]{\exp\left(#1\right)}

\newcommand{\ber}[1]{\mathrm{Bernoulli}(#1)}

\newcommand{\innerProd}[2]{\langle #1, #2\rangle}

\newcommand{\iid}{\mathrel{\overset{\text{i.i.d.}}{\sim}}}

\newcommand{\lb}{\mathrm{lb}}

\newcommand{\GM}[5]{\mathrm{GM}\!\left(#1,\,#2,\,#3,\,#4,\,#5\right)}
\newcommand{\dK}[2]{d_{\mathrm{K}}\!\left(#1,\,#2\right)}
\newcommand{\MGF}[1]{M_{#1}}
\newcommand{\CGF}[1]{K_{#1}}
\newcommand{\cum}[2]{\kappa_{#1}\!\left(#2\right)}
\newcommand{\auditStat}{\widetilde{\rS}_T}
\newcommand{\trueLaw}{\widetilde{Q}_T}
\newcommand{\modelLaw}{Q_T}

%% file: 00-abstract.tex
\begin{abstract}
Privacy auditing provides an important safeguard by estimating the actual information leaked by a model,
thus ensuring that theoretical privacy guarantees hold in practice. We study empirical privacy auditing for differentially private (DP) machine learning, focusing on efficient one-run methods for mechanisms such as DP-SGD. Prior one-run approaches threshold training examples or “canaries” into binary membership guesses, which discards useful information. We show that, in the white-box DP-SGD setting, canary-aligned signals naturally form a sequence of random variables whose normalized sum is asymptotically Gaussian. Leveraging this distributional perspective, we develop a DP-auditing framework that leads to tighter privacy lower bounds from a single training run.
\end{abstract}

%% file: 01-introduction.tex
\section{Introduction}

As data becomes increasingly valuable, a central challenge is to extract statistical utility from potentially sensitive data without compromising the privacy of the individuals who contribute to it. Differential privacy (DP)~\cite{TCC:DMNS06,dwork2014algorithmic} has emerged as a broadly accepted notion for addressing this challenge and has enabled a wide range of privacy-preserving applications, including private deep learning~\cite{AbadiCGMMT016}, synthetic data generation~\cite{dpsynthetic,USENIX:ZWLHBH21,cvprdpsynth,yoon2018pategan,pmlr-v97-mckenna19a}, the release of aggregate statistics~\cite{TCC:DMNS06,LiMHMR15,chan2011private}, and more. 

The growing use of DP in machine learning and data analysis has motivated a parallel need for \emph{privacy auditing}: methods that test whether a concrete implementation actually satisfies its claimed privacy specification. Such discrepancies often arise due to reasons ranging from simple coding errors to incorrect parameter settings. This need has motivated a growing body of work that ranges from using classical bug detection approaches to statistically estimating whether some
privacy-relevant event can distinguish neighboring distributions more strongly than the claimed DP upper bound (see for example~\cite{lu2024eureka,askin2025general,nasr2023tight}).

Classical auditing approaches to private deep learning mechanisms, including DP-SGD, verify the privacy of the mechanism by running it multiple times and apply a statistical test on the (aggregate) outputs~\cite{jagielski2020auditing,nasr2021adversary}. A challenge here is that DP offers worst-case guarantees, which requires finding worst case (neighboring) datasets to test the mechanism. An approach suggested by Carlini et al.~\cite{carlini2019secret} that has proven valuable is to inject structured noise to the data used for training the ML model and testing the effect that this noise has on the output. Inspired by the computer security literature, these specially injected training examples are referred to as {\em canaries}. However, even this approach does not solve the problem of scalability: it requires repeating the learning process (with or without canaries) many times, making it computationally intensive, and often prohibitive. 

Recent work has addressed whether meaningful privacy lower bounds can be obtained while avoiding the overhead of repeatedly running the training algorithms. Steinke et al.\cite{steinke2023privacy} accomplished this using only one training run by embedding many independent canaries in parallel to define many neighboring dataset pairs simultaneously. The advantage of such an auditing method is evident: the cost of auditing the privacy of a model reduces to the cost of simply training the model. They demonstrate that the gradient projections of the canary examples can be thresholded into (binary) membership guesses which give a statistically valid privacy lower bound. Subsequent work has sharpened the analysis of the canary-aligned observations by using $f$-DP tradeoff curves, modeling canary inclusion bits as messages in a noisy channel, and using order statistics to select high-confidence guesses~\citep{mahloujifar2025auditing,xiang2025bits,xiang2025tight}. This one-shot paradigm has proven especially critical for federated learning, where the distributed nature of client data makes traditional, multi-run auditing impractical \cite{andrew2024oneshot}. 

Existing one-run methods for DP-SGD share a common bottleneck: each per-canary score is ultimately collapsed to a single binary membership decision before aggregation, discarding the rich distributional information of the canaries and the underlying mechanism when designing statistical tests.
While these approaches are proven to be very promising, it naturally brings forth the following questions: 
\begin{quote}
   What are the distributional properties that underlie the general success of one-run
   DP-SGD auditing? And,  can we characterize this distribution precisely, and exploit it to develop improved auditing?
\end{quote}

To
understand the distributional properties underlying these prior works, it is worth abstracting their common core: the canary-aligned observation is generated sequentially, with each training step contributing a bounded clipped signal (when the canary is sampled) plus a fresh Gaussian noise term. The final canary score accumulates these noisy observations across training, so by the Central Limit Theorem its distribution converges to a Gaussian.
This intuition is in fact already implicit in prior auditors, which use mean-like statistics on canary-aligned observations.
We conjecture that this asymptotic Gaussian behavior is central to the accuracy of canary-based one-run auditing, and exploiting it directly rather than via a binary thresholding intermediate as in prior works, gives us tighter audits.

\noindent\textbf{Our contributions.}

\textit{A sequence-level distributional view and a Gaussian-pair auditor.} We show that the normalized canary score in one-run DP-SGD auditing is a sum of sequential random variables with a Gaussian limiting law, and we give its mean and variance in closed form as functions of the DP-SGD hyperparameters. Building on this, we design a one-run white-box auditor that models the absent- and present-canary score distributions as a pair of one-dimensional Gaussians. We use the closed-form hockey-stick divergence between Gaussians~\citep{andrew2024oneshot,lu2023normal} to obtain a much tighter lower bound on the DP distance.

\textit{Quantitative convergence guarantees.} We address whether the Gaussian asymptotics manifest within a practical number of training steps ($T$). If the convergence required more steps than typically used in DP-SGD, our method would be suboptimal. We prove that this is not the case. The bound decays as $O(T^{-1/2})$ in general, and decays even faster as $O(T^{-1})$ in the small-sampling-rate regime $q = O(T^{-1/2})$ that is expected for DP-SGD. Applying this bound with our experimental parameters ($T = 2500, \epsilon = 8, q = 0.0819, C = 1, \delta = 10^{-5}$), we obtain a deviation on the order of $10^{-8}$, which is three orders of magnitude smaller than the chosen value of $\delta$ itself.

\textit{Benchmark and Comparison.} We evaluate our auditor on two practically relevant composition mechanisms: DP-SGD, where the sequential canary-aligned scores are asymptotically Gaussian, and DP-FTRL \cite{kairouz2021practical}, where the Gaussianity is exact due to the tree-structured noise accumulation (see Section \ref{sec:exp}). On CIFAR-10 DP-SGD at theoretical $\varepsilon = 8$, our auditor recovers an empirical lower bound of $\approx 6.7$ (about $84\%$ of the analytic upper bound), compared with $\approx 4.7$ for the $f$-DP audit of~\cite{mahloujifar2025auditing} and $\approx 3.3$ for~\cite{steinke2023privacy}; we obtain similar $1$--$2\times$ improvements across all $\varepsilon$ regimes.

%% file: 02-prelim.tex
\section{Preliminaries}

\subsection{Differential privacy and hockey-stick divergence}
Differential privacy (DP)~\cite{TCC:DMNS06} is the most widely accepted privacy definition for the release of (queries on) sensitive data. Let $\DB \sim \DB'$ denote neighboring databases, meaning that $\DB$ and $\DB'$ differ in the addition or removal of one record. 
A randomized mechanism $\Mech$ with output space $\cY$ is $(\varepsilon,\delta)$-DP if, for every neighboring pair $\DB \sim \DB'$ and every measurable event $\cS \subseteq \cY$,
\begin{align*}
    \pr{\Mech(\DB) \in \cS} \leq e^{\varepsilon}\pr{\Mech(\DB') \in \cS} + \delta .
\end{align*}
For a fixed pair of neighboring output distribution $\Mech(\DB), \Mech(\DB')$, the smallest admissible value of $\delta$ by a given $\varepsilon$ is well known as the hockey-stick divergence~\cite{TPCBalleBG20} between $\Mech(\DB)$ and $\Mech(\DB')$ at level $e^\varepsilon$. Formally, 
\begin{align*}
    \delta_{\Mech(\DB), \Mech(\DB')}(\varepsilon) = \sup_{\cS\in \Real^T} \Bigl(\pr{\Mech(\DB)\in \cS}-e^{\varepsilon}\pr{\Mech(\DB')\in \cS}\Bigr)_+,
\end{align*}
where we write $(x)_+ \defin{} \max\{x,0\}$, meaning  $\delta(\varepsilon) \geq 0$. 

In our auditor, the induced neighboring distributions are represented by a pair of one-dimensional Gaussians. For $\theta=(\mu_1,\sigma_1,\mu_2,\sigma_2)$ with $\sigma_1,\sigma_2>0$, it is therefore enough to compute the hockey-stick divergence
\begin{align}
    \label{def:delta-func-eps}
    \delta_\theta(\varepsilon) \defin \delta_{\multiNormal{\mu_1}{\sigma_1^2}, \multiNormal{\mu_2}{\sigma_2^2}}(\varepsilon) 
\end{align}
The following closed form is the one-dimensional specialization of the hockey-stick divergence calculation for arbitrary pairs of Gaussians:

\begin{lemma}[Hockey-stick divergence between one-dimensional Gaussians~\cite{andrew2024oneshot,lu2023normal}]
\label{lem:ndis-1d-a-pos}
Let $\theta=(\mu_1,\sigma_1,\mu_2,\sigma_2)$ with $\mu_1,\mu_2\in\Real$ and $0<\sigma_1^2<\sigma_2^2$, and fix $\varepsilon\geq 0$. 
Let
    \begin{align*}
        \tau &\defin \frac{\sigma_1^2}{\sigma_2^2}, & a &\defin 1-\tau, & b &\defin -\frac{\sigma_1(\mu_1-\mu_2)}{\sigma_2^2}, \\
        c &\defin \varepsilon+\frac{1}{2}\log \tau -\frac{(\mu_1-\mu_2)^2}{2\sigma_2^2}, & \Delta &\defin b^2-2ac, & m &\defin -\frac{\mu_1-\mu_2}{\sigma_1}.
    \end{align*}
If $\Delta \leq 0$, then $\delta_\theta(\varepsilon)=0$.  If $\Delta>0$, define
$  z_{\pm} \defin (-b\pm\sqrt \Delta)/a.$ 
Then, writing $\Phi$ for the standard Gaussian CDF,
\begin{align*}
    \delta_{\theta}(\varepsilon) = \Bigl(\Phi(z_+)-\Phi(z_-)\Bigr) - e^{\varepsilon} \Bigl( \Phi\bigl((z_+-m)\sqrt{\tau}\bigr) - \Phi\bigl((z_--m)\sqrt{\tau}\bigr)\Bigr).
\end{align*}
\end{lemma}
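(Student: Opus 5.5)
The plan is to use the standard characterization of the hockey-stick divergence through the likelihood-ratio (Neyman--Pearson) set, and then reduce everything to Gaussian CDFs by a change of variables. Write $p_1,p_2$ for the densities of $\multiNormal{\mu_1}{\sigma_1^2}$ and $\multiNormal{\mu_2}{\sigma_2^2}$. For any measurable $\cS$ we have $\pr{X_1\in\cS}-e^{\varepsilon}\pr{X_2\in\cS}=\int_{\cS}(p_1-e^{\varepsilon}p_2)$. This integral is maximized by $\cS^\star=\{x: p_1(x)>e^{\varepsilon}p_2(x)\}$, and its value there is nonnegative. Hence $\delta_\theta(\varepsilon)=\pr{X_1\in\cS^\star}-e^{\varepsilon}\pr{X_2\in\cS^\star}$, and the whole task is to identify $\cS^\star$ explicitly.

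Next I would parametrize the real line by the standardized variable $z=(x-\mu_1)/\sigma_1$. Using $x-\mu_2=\sigma_1 z+(\mu_1-\mu_2)$, a direct expansion gives
\begin{align*}
\log\frac{p_1(x)}{p_2(x)} = -\tfrac{1}{2}(1-\tau)z^2+\frac{\sigma_1(\mu_1-\mu_2)}{\sigma_2^2}z+\frac{(\mu_1-\mu_2)^2}{2\sigma_2^2}-\tfrac12\log\tau .
\end{align*}
In the notation of the statement, the condition $\log(p_1/p_2)>\varepsilon$ is therefore equivalent to $a z^2+2bz+2c<0$.

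The sign of $a$ now drives the case split. The hypothesis $\sigma_1^2<\sigma_2^2$ gives $a=1-\tau>0$, so this is an upward-opening parabola. Its reduced discriminant is $\Delta=b^2-2ac$.
\begin{itemize}
\item If $\Delta\le 0$, the quadratic is nonnegative everywhere. Then $\cS^\star$ is empty (or Lebesgue-null), and $\delta_\theta(\varepsilon)=0$.
\item If $\Delta>0$, the roots are exactly $z_\pm=(-b\pm\sqrt{\Delta})/a$, and $\cS^\star$ corresponds to the interval $z\in(z_-,z_+)$.
\end{itemize}

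It remains to compute the two probabilities of this interval.
\begin{itemize}
\item Under $X_1$, $z$ is standard normal, so the first mass is $\Phi(z_+)-\Phi(z_-)$.
\item Under $X_2$, I would standardize instead by $w=(x-\mu_2)/\sigma_2=(\sigma_1 z+\mu_1-\mu_2)/\sigma_2=\sqrt{\tau}\,(z-m)$, using $m=-(\mu_1-\mu_2)/\sigma_1$ and $\sigma_1/\sigma_2=\sqrt\tau$. This map is increasing in $z$, so the interval maps to $\bigl((z_--m)\sqrt\tau,(z_+-m)\sqrt\tau\bigr)$. The second mass is then $\Phi\bigl((z_+-m)\sqrt{\tau}\bigr)-\Phi\bigl((z_--m)\sqrt{\tau}\bigr)$.
\end{itemize}
Substituting both into the Neyman--Pearson expression gives the stated formula.

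The only real obstacle is bookkeeping: getting the signs and the factor of $2$ in the quadratic to match the paper's definitions of $b$, $c$ and $\Delta$, which is where I would double-check the expansion. The optimality of $\cS^\star$ needs one line: the integrand is positive exactly on $\cS^\star$. The $(\cdot)_+$ in the definition is harmless because the value at $\cS^\star$ is nonnegative.
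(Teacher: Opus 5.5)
Your proof is correct and complete. The expansion of $\log(p_1/p_2)$ in the standardized variable $z=(x-\mu_1)/\sigma_1$ is right. The event $\{\log(p_1/p_2)>\varepsilon\}$ is exactly $\{az^2+2bz+2c<0\}$ with the paper's $b$ and $c$. Its roots are $z_\pm$ with reduced discriminant $\Delta$, and since $a>0$ the set is empty when $\Delta\le 0$. The substitution $(x-\mu_2)/\sigma_2=\sqrt{\tau}\,(z-m)$ then produces the second term.

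The paper gives no proof of this lemma; it imports the closed form from the cited works. Your likelihood-ratio (Neyman--Pearson) derivation is the standard argument behind that closed form, so there is nothing to add.
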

Equivalently, given a fixed $\delta$, we define the function $\varepsilon_\theta(\delta)$ as the smallest privacy parameter $\varepsilon$ for which the Gaussian pair has hockey-stick divergence at most $\delta$. Formally,
\begin{align}
    \label{def:epsilon-func-delta}
    \varepsilon_\theta(\delta) \defin
    \inf\left\{
        \varepsilon\geq 0:
        \delta_\theta(\varepsilon)\leq \delta
    \right\}.
\end{align}

\subsection{DP-SGD transcripts and canary-aligned observations}
\label{sec:prelim-dpsgd-canary}

The seminal DP-SGD algorithm~\cite{AbadiCGMMT016} is a differentially private variant of the standard stochastic-gradient method. In each iteration, the algorithm computes per-example gradients on a sampled minibatch, clips each gradient to have $\ell_2$ norm at most $C$, aggregates the clipped gradients, and adds Gaussian noise $\rZ_t \sim \multiNormal{0}{\sigma^2 C^2 I_d}$ before 
updating the model parameters $\theta_t$.

Broadly speaking, there are two general types of DP auditing for ML training mechanisms:  In {\em black-box} auditing, the auditor may observe only the final model, or possibly query access to its predictions.  In \emph{white-box} auditing, the 
auditor sees the internal state of training; we focus on the standard 
instantiation in which the auditor observes the noisy averaged gradient 
$\widetilde{g}_t^{(b)}$ at each step, consistent with the privacy 
analysis of~\cite{AbadiCGMMT016}. We use the bit $b \in \{0,1\}$ to 
indicate whether a target canary $x^\star$ is absent ($b=0$) or present 
($b=1$) in the training dataset, following~\cite{nasr2023tight}, 
and write $\mathsf{Tr}^{(b)}_T = (\widetilde{g}_t^{(b)}, \theta_t^{(b)})_{t \in [T]}$ 
for the resulting white-box transcript. The full procedure is formalized 
as~\Cref{alg:dpsgd-transcript} in Appendix \ref{app:transcript}, which is identical to transcript in \cite{steinke2023privacy}.

\Cref{def:canary-aligned-obs} formally defines the sequential observation used by the white-box audit. Intuitively, the auditor projects the privatized gradient at each iteration onto the direction of the canary gradient. Thus, the observation in our analysis is not the model iterate itself, but a one-dimensional canary-aligned projection of the noisy gradient update. 

\begin{definition}[Canary-aligned sequential observation](Following prior works\cite{steinke2023privacy,mahloujifar2025auditing}
\label{def:canary-aligned-obs})
Let $b\in\bits$, and let  $\mathsf{Tr}^{(b)}_T=\bigl(\widetilde g_t^{(b)},\theta_t^{(b)}\bigr)_{t\in[T]}$ be the white-box DP-SGD transcript from~\Cref{alg:dpsgd-transcript}. Let $\bar u$ be a deterministic unit vector, and for each iteration $t\in[T]$, define
\begin{align*}
    c_t^{(b)} \defin \clip_C\!\left(\nabla_{\theta}\ell(\theta_{t-1}^{(b)};x^\star)
    \right),  \qquad
    u_t^{(b)} \defin
    \begin{cases}
        c_t^{(b)}/\twoNorm{c_t^{(b)}} & \text{if } c_t^{(b)}\neq 0,\\
        \bar u & \text{otherwise}. 
    \end{cases}
\end{align*}
The \emph{canary-aligned sequential observation} induced by $\mathsf{Tr}^{(b)}_T$ and $x^\star$ is
\begin{align*}
    X_{1:T}^{(b)} \defin \bigl(X_1^{(b)},\ldots,X_T^{(b)}\bigr) 
    \qquad \text{where,} \quad X_t^{(b)} \defin \innerProd{u_t^{(b)}}{\widetilde g_t^{(b)}} 
\end{align*}
\end{definition}

%% file: 03-related_works.tex
\section{Related Work}
\label{sec: related works}
The goal of privacy auditing is to investigate the extent to which the implementation of private mechanisms adhere to theoretical guarantees of privacy. Membership inference attacks, formalized by Shokri et al.\cite{shokri2017membership} and further analyzed by Yeom et al.\cite{yeom2018privacy}, serve as the fundamental primitive for auditing data privacy in machine learning, quantifying the extent to which a model leaks information about its training set. Carlini et al.\cite{carlini2019secret} introduced the concept of injecting unique, identifiable "canaries" into the training data to measure unintended memorization. 

Building on these foundations, a substantial line of work audits DP-SGD by repeatedly training the mechanism under neighboring datasets and converting attack success rates into privacy lower bounds, beginning with the data-poisoning audits of Jagielski et al.\cite{jagielski2020auditing} and the first tight white-box DP-SGD audit of Nasr et al.\cite{nasr2021adversary}. Subsequent work has refined the statistical conversion via Bayesian credible intervals \cite{zanella2023bayesian}, generic black-box estimators \cite{lu2024eureka, askin2025general}, and worst-case initializations for tight black-box bounds \cite{annamalai2024nearly}. Most relevant to our work, \cite{nasr2023tight} fit a Gaussian DP model to canary-aligned dot products, exploiting the same underlying Gaussian structure we leverage but each training run only contributes a single observation pair, and tight bounds require thousands of runs.

Steinke et al.\cite{steinke2023privacy} introduced the idea that, by parallelizing across many independent canaries, a single training run can produce enough statistical evidence for a meaningful privacy lower bound. Their construction thresholds each canary's score into a binary membership guess and aggregates these guesses into a confidence interval on $\varepsilon$. Mahloujifar et al.\cite{mahloujifar2025auditing} sharpened this analysis by working directly with $f$-DP trade-off curves, which enables auditing of the full privacy trade-off curve, rather than only testing a localized $(\varepsilon,\delta)$ point. Xiang et al.\cite{xiang2025bits,xiang2025tight} reinterpret one-run auditing as a noisy-channel coding problem and use order statistics to select high-confidence guesses, while Liu et al.\cite{liu2025enhancing} strengthen the underlying membership-inference attack via quantile regression. Keinan et al.~\cite{keinan2025well} characterize the limits of distribution-free, guessing-based one-run auditing. Our works differ from these works by taking a different analysis route: instead of thresholding the canary score to binary random variable, we exploit parametric Gaussian model for real-valued canary scores. We also note that Steinke et al.~\cite{steinke2023privacy} observe that their analysis is generic and may be loose for realistic Gaussian-noise mechanisms. They point to algorithm-specific analyses exploiting the iterative structure of DP-SGD as an important future direction, and our work pursues that direction and give an affirmative answer.

In federated learning, \cite{maddock2023canife} and \cite{andrew2024oneshot} studied the problem of empirical privacy estimation in both per-round and one-shot settings. We note that, while \cite{andrew2024oneshot} also uses a Gaussian approximation, it is applied to cosine-angle statistics between canary directions and noised gradient, rather than to the aggregate canary score. Our Gaussian model instead characterizes the normalized sum of canary observations over the full training trajectory, which is crucial in composed mechanisms such as DP-FTRL and DP-SGD and, in experiments, we show a clear empirical advantage.

%% file: 04-methodology.tex
\section{Methodology}
\label{sec:methodology}

In this section, we present our one-run white-box auditing procedure. Given a fixed privacy parameter $\delta$, the auditor outputs an empirical lower bound on the corresponding privacy parameter $\varepsilon$ for a DP-SGD implementation and can be extended to other mechanisms. At a high level, the procedure has three steps. First, the auditor runs white-box DP-SGD once with many independently inserted canaries, following the transcript model in~\Cref{alg:dpsgd-transcript}. Each canary induces a sequential observation $X_{1:T}^{(b)}$ from the resulting transcript, as defined in~\Cref{def:canary-aligned-obs}. Second, for each canary, the auditor aggregates and normalizes its sequential observation into a single scalar audit score $S_T^{(b)}$. Across many canaries, these scores form two empirical samples: one from the absent-canary world and one from the present-canary world. Third, the auditor models these two samples as independent draws from two Gaussian distributions, and the hockey-stick divergence between these two Gaussians yields the empirical lower bound on $\varepsilon$.

\Cref{def:canary score} introduces the scalar score produced by each canary. Recall that \Cref{def:canary-aligned-obs} associates each canary with a sequential observation
$X_{1:T}^{(b)}$, where $X_t^{(b)}$ is the projection of the noisy gradient at iteration $t$ onto the canary's direction. The auditor compresses this sequence into a single normalized score by summing the projected observations over the training trajectory and scaling by $1/\sqrt{T}$. 

\begin{definition}[Canary score]
\label{def:canary score}
Let $X_{1:T}^{(b)}$ be as in~\Cref{def:canary-aligned-obs}. The \emph{canary score} induced by this sequential observation is
\[
    S_T^{(b)} \defin \frac{1}{\sqrt{T}}\sum_{t=1}^{T} X_t^{(b)} .
\]
\end{definition}

The per iteration observation $X_t^{(b)}$ admits a natural signal-plus-noise interpretation. Intuitively, when the canary is absent, the projected observations $X_t^{0}$ contain only background contributions and projected DP noise. When the canary is present, $X_t^{1}$ additionally contains a clipped signal if the canary is sampled. Thus, the canary score $X_t^{b}$ is expected to shift between the neighboring worlds $b=0$ and $b=1$ with an amount of Bernoulli-sampled canary contributions. 

This motivates a Gaussian model for the normalized canary score $S_T^{(b)} = \frac{1}{\sqrt{T}}\sum_{t=1}^{T} X_t^{(b)}$. In the world where the canary is absent, the score is dominated by sum of independent DP noise. In the world where the canary is present, the score additionally accumulates Bernoulli-sampled canary contributions. By the Central Limit Theorem, the normalized sum of many i.i.d. finite-variance observations is well approximated by a one-dimensional Gaussian. We therefore approximate the canary-score distributions under the two neighboring worlds by a Gaussian pair
\begin{align*}
    G_0 \defin \multiNormal{\mu_0}{v_0},
    \qquad
    G_1 \defin \multiNormal{\mu_1}{v_1},
\end{align*}
where $G_0$ models the distribution of $S_T^{(0)}$ and $G_1$ models the distribution of $S_T^{(1)}$.

We next present an idealization that makes the modeling choice for the canary-score distribution $S_T^{(b)}$ explicit. This model isolates the two dominant factors: the projected Gaussian DP noise (present in both worlds) and the clipped canary contribution (present only when the canary is sampled). Two idealizations make this analysis tractable. First, we treat the contribution of the remaining training samples to the canary-aligned score as negligible: any single non-canary sample contributes at most $O(C/qn)$ to the projection, which is dominated by the per-step DP noise of scale $\sigma C$, while the canary itself contributes $\Theta(C)$ along its own direction when sampled. Second, we treat the score sequences associated with different canaries as mutually independent, justified by near-orthogonality of randomly sampled canary directions in high-dimensional gradient space. Both idealizations are inherited, implicitly or explicitly, by prior one-run audits \citep{steinke2023privacy, mahloujifar2025auditing, andrew2024oneshot, xiang2025bits}.

\begin{model}[Canary-observation idealization]
\label{model:canary score}
For each $t\in[T]$, the absent- and present-canary observations take the form
\begin{align*}
    X_t^{(0)} = \rZ_t^{(0)}, 
    \qquad 
    X_t^{(1)} = B_t C + \rZ_t^{(1)},
    \qquad t \in [T],
\end{align*}
where $B_t \iid \ber{q}$, $\rZ_t^{(0)} \iid \multiNormal{0}{\sigma^2}$, and $\rZ_t^{(1)} \iid \multiNormal{0}{\sigma^2}$ are mutually independent. 
\end{model}

Under \Cref{model:canary score}, the absent-canary score is exactly Gaussian, while the present-canary score is a normalized sum of independent Bernoulli-Gaussian random variables and is therefore asymptotically Gaussian. \Cref{prop:canary score model} gives the corresponding mean and variance of the Gaussian surrogate in terms of the DP-SGD parameters $C$, $T$, $q$, and $\sigma^2$.

\begin{proposition}
\label{prop:canary score model}
Let $S_T^{(0)}$ and $S_T^{(1)}$ be the canary scores (ref~\Cref{def:canary score}). Under the modeling of~\Cref{model:canary score}, then,
\begin{align*}
    S_T^{(0)} \sim \multiNormal{0}{\sigma^2}, \qquad
    \Ex{S_T^{(1)}} = \sqrt{T}\,qC, \qquad \Var{S_T^{(1)}} = \sigma^2 + q(1-q)C^2.
\end{align*}
Moreover, $S_T^{(1)}$ asymptotically follows the distribution $\multiNormal{\sqrt{T}\,qC}{\sigma^2 + q(1-q)C^2}$ at $T \rightarrow \infty$.
\end{proposition}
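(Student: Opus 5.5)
The plan is to treat the three claims separately, using only the independence structure of \Cref{model:canary score} and the linearity of the score in \Cref{def:canary score}.

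\textbf{Absent-canary score.} Under \Cref{model:canary score}, $S_T^{(0)} = T^{-1/2}\sum_{t=1}^T \rZ_t^{(0)}$ with $\rZ_t^{(0)} \iid \multiNormal{0}{\sigma^2}$. A sum of independent Gaussians is Gaussian, with mean $0$ and variance $T\sigma^2$. Scaling by $T^{-1/2}$ gives $\multiNormal{0}{\sigma^2}$ exactly, for every $T$. One could also verify this through the product of characteristic functions, $\exp(-\sigma^2 s^2/2T)^T$.

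\textbf{Mean and variance of the present-canary score.} Write $X_t^{(1)} = B_t C + \rZ_t^{(1)}$. The summands are i.i.d. across $t$, and within each summand $B_t$ and $\rZ_t^{(1)}$ are independent. Therefore
\[
\Ex{X_t^{(1)}} = qC, \qquad \Var{X_t^{(1)}} = C^2 q(1-q) + \sigma^2 .
\]
By linearity, $\Ex{S_T^{(1)}} = T^{-1/2}\cdot T qC = \sqrt{T}\,qC$. By independence across $t$, $\Var{S_T^{(1)}} = T^{-1}\cdot T\,(\sigma^2 + q(1-q)C^2)$. Both are routine computations.

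\textbf{Asymptotic normality.} This is the only step needing care, and the obstacle is mainly one of interpretation rather than technique. The mean $\sqrt{T}\,qC$ diverges as $T\to\infty$ (for fixed $q$), so "$S_T^{(1)}$ converges in distribution to $\multiNormal{\sqrt{T}qC}{\cdot}$" cannot be read literally. I would make it precise as follows: the centered score
\[
S_T^{(1)} - \sqrt{T}\,qC = T^{-1/2}\sum_{t=1}^T Y_t, \qquad Y_t \defin (B_t - q)C + \rZ_t^{(1)},
\]
converges in distribution to $\multiNormal{0}{\sigma^2+q(1-q)C^2}$. Equivalently, $\sup_x \bigl|\Pr[S_T^{(1)}\le x] - \Phi\bigl((x-\sqrt{T}qC)/\sqrt{v_1}\bigr)\bigr| \to 0$ with $v_1 = \sigma^2 + q(1-q)C^2$. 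The $Y_t$ are i.i.d. with mean zero and finite variance $v_1$, so the classical Lindeberg--L\'evy CLT applies directly. Pólya's theorem then upgrades pointwise CDF convergence to uniform convergence, since the limit is continuous.

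\textbf{Optional quantitative version.} For a rate foreshadowing the later convergence analysis, I would note that $Y_t$ has finite third absolute moment, bounded by a constant times $C^3 + \sigma^3$. The Berry--Esseen theorem then gives a Kolmogorov-distance bound of order $\Ex{|Y_1|^3}/(v_1^{3/2}\sqrt{T}) = O(T^{-1/2})$. This is not needed for the asymptotic statement itself.
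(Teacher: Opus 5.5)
Your proof is correct. It establishes everything the proposition asserts, and it matches the paper's own justification. The paper gives no dedicated proof beyond the exact Gaussianity of $S_T^{(0)}$, the moment computation, and an appeal to the CLT for i.i.d.\ finite-variance summands. It also makes the limit precise the way you do, via the centered statistic $S_T^{(1)}-\sqrt{T}\,qC$ in Section~\ref{sec:rate-of-convergence}.

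The routes differ only in the quantitative part. In Appendix~\ref{app:converge} the paper does the following:
\begin{enumerate}
\item It writes the centered summand as $\GM{1-q}{q}{-qC}{(1-q)C}{\sigma}$.
\item It reads off all cumulants of the normalized sum from the cumulant generating function: $\kappa_2=\sigma^2+q(1-q)C^2$, and $\kappa_\ell=O(T^{-(\ell/2-1)})$ for $\ell\ge 3$.
\item It obtains Gaussianity of the limit from moment determinacy.
\item It uses an Edgeworth expansion to get the exact leading Kolmogorov term $q(1-q)\,|1-2q|\,C^3/\bigl(6\sqrt{2\pi T}\,[\sigma^2+q(1-q)C^2]^{3/2}\bigr)$.
\end{enumerate}

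Your Lindeberg--L\'evy plus P\'olya argument is more elementary and more general, since it never uses the mixture structure. Your Berry--Esseen bound is genuinely non-asymptotic. The paper's $O(1/T)$ remainder, by contrast, is unquantified and needs Cram\'er's condition.

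What Berry--Esseen cannot see is that the Gaussian noise contributes nothing to non-Gaussianity. The ratio $\Ex{|Y_1|^3}/[\sigma^2+q(1-q)C^2]^{3/2}$ tends to $2\sqrt{2/\pi}\approx 1.6$ as $q\to 0$. So at the paper's parameters ($T=2500$) your bound is of order $10^{-2}$, far above $\delta=10^{-5}$, and it cannot give the $O(1/T)$ rate for $q=O(T^{-1/2})$. The cumulant route isolates the Bernoulli component, whose third cumulant carries the factor $q(1-q)(1-2q)$. The paper's later claim that the approximation error is negligible relative to $\delta$ rests on that factor.
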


\begin{algorithm}[htp]
\caption{One-run white-box DP-SGD auditor (under Model~\ref{model:canary score})}
\label{alg:auditor}
\begin{algorithmic}[1]
\Require Privacy parameter $\delta$; absent-canary scores $\cS_0=\{s_i^{(0)}\}_{i=1}^{m_0}$; present-canary scores $\cS_1=\{s_j^{(1)}\}_{j=1}^{m_1}$; parameter $m = m_1 + m_0$; confidence level $\alpha$.
\Ensure Empirical lower bound $\widehat\varepsilon_{\lb}$ for privacy parameter $\varepsilon$.
\State Using $\cS_0$ and $\cS_1$, construct a $(1-\alpha)$-confidence region
\begin{align*}
    C_\alpha \subseteq \Theta
    \qquad\text{for}\qquad
    \theta^\star=(\mu_0,\sigma_0,\mu_1,\sigma_1),
\end{align*}
where $\multiNormal{\mu_0}{\sigma_0^2}$ and $\multiNormal{\mu_1}{\sigma_1^2}$ are the absent- and present-canary score distributions.
\smallskip
\State Compute $\widehat\varepsilon_{\lb}(\alpha) \gets \inf_{\theta\in C_\alpha} \varepsilon_\theta(\delta),$
where $\varepsilon_\theta(\delta)$ is defined in~\Cref{def:epsilon-func-delta}.
\State \Return $\widehat\varepsilon_{\lb}(\alpha)$.
\end{algorithmic}
\end{algorithm}

The above discussion reduces the one-run white-box auditing problem to estimating the parameters of a one-dimensional Gaussian pair. \Cref{alg:auditor} presents our one-run white-box DP-SGD auditor based on this reduction. Given the absent- and present-canary score samples, the auditor first constructs a $(1-\alpha)$-confidence region $C_\alpha$ for the unknown Gaussian parameters $\theta^\star=(\mu_0,\sigma_0,\mu_1,\sigma_1)$. It then evaluates the most conservative privacy lower bound over all Gaussian pairs in this confidence region by computing
\begin{align*}
    \widehat\varepsilon_{\lb}(\alpha) \defin \inf_{\theta\in C_\alpha}\varepsilon_\theta(\delta).
\end{align*}
Consequently, the auditor returns $\widehat\varepsilon_{\lb}(\alpha)$ as a confidence-adjusted lower bound of the true privacy parameter $\varepsilon$ under the~\Cref{model:canary score}.

%% file: 06-convergence-analysis.tex
\subsection{Convergence Rate of the Gaussian Approximation}
\label{sec:rate-of-convergence}

The methodology in \Cref{sec:methodology} models the canary score $S_T^{(b)}$ as a one-dimensional Gaussian under both the absent-canary and present-canary worlds. Under the idealized scalar observation, the absent-canary score $S_T^{(0)}\sim\multiNormal{0}{\sigma^2}$ is exactly Gaussian for every $T$. The present-canary score $S_T^{(1)}$ however, is only asymptotically Gaussian: each summand $X_t^{(1)} = B_t C + \rZ_t$ mixes a Bernoulli signal with Gaussian noise. In Appendix \ref{app:converge} we present a general theorem that captures the exact rate of convergence for normalized i.i.d.\ sum of Gaussian-mixture random variables. In this subsection we specifically apply it to the DP auditing setting to compute the rate of convergence of $S_T^{(1)}$. 

In the DP auditing setting, the per-step audit observation is i.i.d.: $X_t^{(1)} = B_tC + \rZ_t$ with $B_t\iid\ber{q}$ and $\rZ_t\iid\multiNormal{0}{\sigma^2}$. Centering by the mean, $\widetilde X_t \defin X_t^{(1)} - qC$, we can write 
\[
\widetilde X_t\sim\GM{1-q}{q}{-qC}{(1-q)C}{\sigma}
\]
where $\GM{u}{v}{\mu_1}{\mu_2}{\sigma}$ represent a Gaussian mixture distribution with weights $u$ and $v$ (where $u+v=1$) and the two Gaussians have same standard deviation $\sigma$ and different means $\mu_1$ and $\mu_2$ respectively.
We can further define:
\[
\auditStat \defin \frac{1}{\sqrt T}\sum_{t=1}^T \widetilde X_t = S_T^{(1)} - \sqrt T\, qC
\]

For the above normalized sum of Gaussian mixture distribution we derive the exact cumulants and hence its rate of convergence in Appendix \ref{app:converge}, giving us the following result:

\begin{theorem}[Closed-form Kolmogorov-distance bound for the DP audit statistic]
\label{thm:dp-kolmogorov-bound}
Consider the canary-observation \cref{model:canary score} of Algorithm~\ref{alg:auditor} with $T$ training steps, sampling rate $q\in(0,1)$, clipping norm $C>0$, and per-step noise standard deviation $\sigma>0$. Let $\trueLaw$ denote the true law of the centered audit statistic $\auditStat$ and let $\modelLaw\defin\multiNormal{0}{\sigma^2 + q(1-q)C^2}$ denote its Gaussian model. Then
\[
    \dK{\trueLaw}{\modelLaw} =
    \frac{q(1-q)\,|1-2q|\,C^3}
         {6\sqrt{2\pi\,T}\,\bigl[\sigma^2 + q(1-q)C^2\bigr]^{3/2}}
    + O(1/T).
\]
In particular,
\[
    \dK{\trueLaw}{\modelLaw}
    =O\bigl(T^{-1}\bigr)
    \quad\text{for }q=O(T^{-0.5}).
\]
\end{theorem}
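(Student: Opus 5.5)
The plan is a one-term Edgeworth expansion for the normalized i.i.d.\ sum $\auditStat$. Write $s^2 \defin \sigma^2+q(1-q)C^2$ for the common variance. The summands are $\widetilde X_t = C(B_t-q)+\rZ_t$. Their cumulant generating function therefore splits as
\[
\CGF{\widetilde X}(\lambda)=\log\bigl(1-q+qe^{\lambda C}\bigr)-\lambda qC+\tfrac12\sigma^2\lambda^2 .
\]
Consequently every cumulant of order $r\ge 3$ comes only from the centered Bernoulli part. Its third cumulant is $\kappa_3=q(1-q)(1-2q)C^3$, and its fourth is $\kappa_4=q(1-q)(1-6q+6q^2)C^4$. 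The cumulants of $\auditStat$ are $\kappa_r/T^{r/2-1}$, so the standardized third cumulant is $\gamma_3=\kappa_3/(s^3\sqrt T)$.

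Next I would invoke the classical Edgeworth expansion for i.i.d.\ sums (the general Gaussian-mixture theorem of Appendix~\ref{app:converge}):
\[
\Pr\bigl[\auditStat\le sx\bigr]=\Phi(x)-\frac{\gamma_3}{6}(x^2-1)\phi(x)+R_T(x),
\qquad \sup_x|R_T(x)|=O(1/T).
\]
This expansion requires Cramér's condition and a finite fourth moment. Both hold here: the characteristic function of $\widetilde X_t$ is $e^{-\sigma^2u^2/2}$ times a bounded Bernoulli factor, so it decays like a Gaussian, and all moments are finite. I would prove the remainder bound by smoothing, via Esseen's inequality, the difference of the characteristic function $\bigl(\varphi(u/(s\sqrt T))\bigr)^T$ and $e^{-u^2/2}\bigl(1+\tfrac{\gamma_3}{6}(iu)^3\bigr)$. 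On the range $|u|\le \epsilon\sqrt T$ I would use the cumulant series. On the complement I would use the Gaussian factor $e^{-\sigma^2u^2/(2s^2)}$, which gives exponentially small contributions. Because the density of $\auditStat$ is smooth, no lattice correction appears.

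The Kolmogorov distance is then $\sup_x\frac{|\gamma_3|}{6}|x^2-1|\phi(x)+O(1/T)$. The function $|x^2-1|\phi(x)$ attains its supremum at $x=0$, with value $\phi(0)=1/\sqrt{2\pi}$; the competing local maxima at $x=\pm\sqrt3$ take the smaller value $2\phi(\sqrt3)$. This gives exactly
\[
\frac{q(1-q)|1-2q|C^3}{6\sqrt{2\pi T}\,s^3}.
\]
For the regime $q=O(T^{-1/2})$, the leading term is $O(q/\sqrt T)=O(1/T)$, since $s\ge\sigma$.

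The main obstacle is the remainder. When $q$ depends on $T$, the $O(1/T)$ constant must remain bounded uniformly in $q$. To get this I would bound $|R_T|$ explicitly by $c\,\bigl(\kappa_4/s^4+\kappa_3^2/s^6\bigr)/T$ plus exponentially small terms. These quantities are bounded uniformly because $\kappa_3$ and $\kappa_4$ are $O(qC^3)$ and $O(qC^4)$ while $s\ge\sigma$.
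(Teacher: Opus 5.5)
Your proposal is correct and follows essentially the same route as the paper. Both use a one-term Edgeworth expansion, justified by Cram\'er's condition via the Gaussian factor in the characteristic function, with standardized third cumulant $q(1-q)(1-2q)C^3/[\sigma^2+q(1-q)C^2]^{3/2}$, and both maximize $|x^2-1|\varphi(x)$ at $x=0$, where it equals $1/\sqrt{2\pi}$. Your argument that the $O(1/T)$ remainder is uniform in $q$ (all non-Gaussian cumulants are $O(q(1-q))$ and $s\ge\sigma$) is a worthwhile addition: the paper only cites the fixed-distribution Edgeworth lemma and leaves implicit this uniformity, which the $q=O(T^{-1/2})$ clause requires.
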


Hence, while in general the audit statistic distribution converges to a Gaussian at rate $O(1/\sqrt{T})$, for asymptotically small sampling rate (e.g. $q = O(1/\sqrt{T})$), as is the case for DP-SGD, the distributions converge even faster (rate at least $O(1/T)$).

\paragraph{Improved concrete estimate for convergence rate} 
For the DP-SGD auditing parameters, we can in fact get tighter bound than the above global Kolmogorov-distance estimate. The intuition is that the audit only ever queries the CDF in one specific region, and hence we need the Gaussian approximation to be accurate only where the audit looks.

Let $F_T$ be the CDF of the standardized audit statistic $\auditStat$; and lets functions $\varphi$ and $\Phi$ respectively represent the standard Gaussian density function and its corresponding CDF. Then we have the following tighter bound which we prove in the Appendix \ref{app:converge}: 
\begin{corollary}[Tail-localized CDF deviation]
\label{cor:tail-deviation}
Under the setting of~\Cref{thm:dp-kolmogorov-bound}, for every threshold
$x_0 \ge \sqrt 3$,
\begin{align*}
    \sup_{|x|\ge x_0}|F_T(x)-\Phi(x)|=
    \frac{q(1-q)\,|1-2q|\,C^3}
         {6\sqrt T\,\bigl[\sigma^2 + q(1-q)C^2\bigr]^{3/2}}
    \cdot \varphi(x_0)\,(x_0^2-1)+ O(1/T).
\end{align*}
\end{corollary}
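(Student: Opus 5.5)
The plan is to use the one-term Edgeworth expansion for the standardized sum, exactly as in the proof of \Cref{thm:dp-kolmogorov-bound}, and then localize its correction term to the tail $|x|\ge x_0$.

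\textbf{Step 1: Edgeworth expansion.} The summands $\widetilde X_t$ are i.i.d. They have a Gaussian component, so the characteristic function decays and Cramér's condition holds. All moments are finite because the law is a Gaussian mixture. Writing $v\defin\sigma^2+q(1-q)C^2$ for the variance, the one-term Edgeworth expansion holds uniformly in $x$:
\[
F_T(x)=\Phi(x)-\frac{\kappa_3}{6\sqrt T\,v^{3/2}}\,(x^2-1)\,\varphi(x)+O(1/T).
\]
The third cumulant is $\kappa_3=q(1-q)(1-2q)C^3$. It comes only from the Bernoulli part, since the Gaussian part contributes no cumulants of order $\ge 3$. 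This is the same cumulant computation as in Appendix~\ref{app:converge}.

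\textbf{Step 2: reduce to a one-variable supremum.} From Step 1,
\[
\sup_{|x|\ge x_0}|F_T(x)-\Phi(x)|=\frac{|\kappa_3|}{6\sqrt T\,v^{3/2}}\;\sup_{|x|\ge x_0} g(x)+O(1/T),
\qquad g(x)\defin|x^2-1|\,\varphi(x).
\]
The $O(1/T)$ remainder is uniform in $x$, so it passes through the supremum.

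\textbf{Step 3: evaluate the supremum.} Since $g$ is even, it suffices to consider $x\ge x_0\ge\sqrt3>1$, where $g(x)=(x^2-1)\varphi(x)$. Differentiating,
\[
g'(x)=\bigl(2x-x(x^2-1)\bigr)\varphi(x)=x\,(3-x^2)\,\varphi(x).
\]
This is $\le 0$ for $x\ge\sqrt3$, so $g$ is nonincreasing on $[x_0,\infty)$ and the supremum is attained at $x_0$, giving $\varphi(x_0)(x_0^2-1)$. This is exactly why the hypothesis $x_0\ge\sqrt3$ is needed: $\sqrt3$ is the maximizer of $g$.

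\textbf{Step 4: combine.} Substituting $|\kappa_3|=q(1-q)|1-2q|C^3$ and the value from Step 3 into Step 2 gives the stated expression.

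\textbf{Main obstacle.} The only delicate step is Step 1: we need the Edgeworth expansion with a uniform $O(1/T)$ remainder and no $T^{-1/2}$ lattice correction. This requires Cramér's condition, $\limsup_{|s|\to\infty}|\mathbb{E}\,e^{is\widetilde X}|<1$. It holds here because $|\mathbb{E}\,e^{is\widetilde X}|\le e^{-\sigma^2s^2/2}$. I would invoke the general theorem of Appendix~\ref{app:converge} for this, or the standard Esseen–Cramér result otherwise.
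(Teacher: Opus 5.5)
Your proposal is correct and follows the paper's proof: a uniform one-term Edgeworth expansion with $O(1/T)$ remainder, then monotonicity of $\varphi(x)(x^2-1)$ on $|x|\ge\sqrt3$ to place the supremum at $|x|=x_0$. Your derivative $x(3-x^2)\varphi(x)$ and the Cram\'er-condition bound $|\mathbb{E}\,e^{is\widetilde X}|\le e^{-\sigma^2 s^2/2}$ fill in details the paper states tersely. One wording fix: $\sqrt3$ maximizes $|x^2-1|\varphi(x)$ only on $|x|\ge 1$, not globally, since the value $\varphi(0)$ at $x=0$ is larger; this does not affect the argument.
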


For concrete parameters that we in our experiments: $T=2500$, $\epsilon=8$, $\sigma=2.6245$, $q=0.0819$, and $C=1$, the prefactor $ \frac{q(1-q)\,|1-2q|\,C^3}{6\sqrt T\,(\sigma^2 + q(1-q)C^2)^{3/2}} \approx 1.14\times 10^{-5}$, and the tail factor $\varphi(x_0)(x_0^2-1)$ controls how small the deviation becomes as follows:
\begin{center}
\begin{tabular}{c|c|c}
$x_0$ & $\varphi(x_0)(x_0^2-1)$ & $\sup_{|x|\ge x_0}|F_T(x)-\Phi(x)|$ \\\hline
$3$   & $3.55\times 10^{-2}$    & $4.04\times 10^{-7}$ \\
$4$   & $2.01\times 10^{-3}$    & $2.29\times 10^{-8}$ \\
$5$   & $3.58\times 10^{-5}$    & $4.08\times 10^{-10}$\\
\end{tabular}
\end{center}

The threshold $x_0=4$ corresponds to a Gaussian tail mass
$\Phi(-4)\approx 3\times 10^{-5}$, which is the regime relevant for our $\delta = 10^{-5}$ audit. Within this audit-relevant tail the Gaussian model matches the true distribution to within $\sim 10^{-8}$, which is three orders of magnitude smaller. This gives more confidence that the empirical privacy lower bound produced by the auditor closely tracks the theoretical guarantee in our experimental setup.

%% file: 07-confidence-intervals-gaussian-param-estimation.tex
\subsection{Gaussian parameter estimation and confidence regions}
\label{subsec:gaussian-parameter-confidence}

The convergence rate study in~\Cref{sec:rate-of-convergence} shows the Gaussian approximation closely fits the finite-$T$ present-canary score distribution in the DP-SGD settings. In this section, we address a separate source of error: the finite number of canary scores used to estimate the Gaussian parameters. 

\Cref{prop:auditor-confidence-guarantee} states that our auditor's output (\Cref{alg:auditor}) serves as a conservative lower bound of privacy parameter $\varepsilon$ with high probability under the Gaussian score model. 
\begin{proposition}[Confidence guarantee for \Cref{alg:auditor}]
\label{prop:auditor-confidence-guarantee}
Let $\widehat{\varepsilon}_{\lb}(\alpha)$ be the output of \Cref{alg:auditor}. Then
\begin{align*}
    \pr{\widehat{\varepsilon}_{\lb}(\alpha) \leq \varepsilon^\star} \geq 1-\alpha .
\end{align*}
\end{proposition}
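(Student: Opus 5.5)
The proof is a standard inclusion argument: the output of \Cref{alg:auditor} is an infimum over the confidence region, so whenever the region contains the true parameter, the output can be no larger than the true parameter's value. First I will fix the interpretation. Under the Gaussian score model, the absent- and present-canary scores are independent draws from $\multiNormal{\mu_0}{\sigma_0^2}$ and $\multiNormal{\mu_1}{\sigma_1^2}$ with true parameter $\theta^\star=(\mu_0,\sigma_0,\mu_1,\sigma_1)$. The target is $\varepsilon^\star \defin \varepsilon_{\theta^\star}(\delta)$ as in \Cref{def:epsilon-func-delta}, the privacy parameter of the neighboring pair at the fixed $\delta$. The region $C_\alpha$ built in step 1 of \Cref{alg:auditor} is, by hypothesis, a valid $(1-\alpha)$-confidence region, meaning $\pr{\theta^\star\in C_\alpha}\ge 1-\alpha$ with probability taken over the $m=m_0+m_1$ canary scores.

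The key step is a deterministic inclusion. On the event $E\defin\{\theta^\star\in C_\alpha\}$, the definition of the infimum gives
\begin{align*}
\widehat\varepsilon_{\lb}(\alpha)=\inf_{\theta\in C_\alpha}\varepsilon_\theta(\delta)\le \varepsilon_{\theta^\star}(\delta)=\varepsilon^\star .
\end{align*}
This needs no property of $\varepsilon_\theta(\delta)$ beyond its being well defined in $[0,\infty]$ for every $\theta\in\Theta$. That holds because $\delta_\theta(\varepsilon)\to 0$ as $\varepsilon\to\infty$ for any Gaussian pair, so the set in \Cref{def:epsilon-func-delta} is nonempty and the infimum is finite. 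Lemma~\ref{lem:ndis-1d-a-pos} is used only to compute $\varepsilon_\theta$; the case $\sigma_0=\sigma_1$ or the reversed variance order is handled by symmetry or by the standard equal-variance formula. Measurability of $\widehat\varepsilon_{\lb}(\alpha)$ is not actually needed: the event $\{\widehat\varepsilon_{\lb}(\alpha)\le\varepsilon^\star\}$ contains $E$, so its (inner) probability is at least $\pr{E}\ge 1-\alpha$.

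The main obstacle is that the result is only as good as the coverage of $C_\alpha$. The substantive work is therefore to exhibit a concrete region with exact or conservative finite-sample coverage. The natural choice is a product construction. For each world $b\in\{0,1\}$, combine a Student-$t$ interval for $\mu_b$ with a $\chi^2_{m_b-1}$ interval for $\sigma_b^2$, each at level $1-\alpha/4$, and take a union bound over the four intervals, using independence of $\cS_0$ and $\cS_1$. Alternatively, one can use the exact joint pivot of (sample mean, sample variance), which are independent under normality, per world at level $\sqrt{1-\alpha}$, so that independence across the two worlds gives overall coverage $1-\alpha$.

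I would state the proposition conditional on a valid $C_\alpha$ and verify coverage for one such construction. I would also remark that the guarantee concerns $\theta^\star$ of the Gaussian model. Transferring it to the true $S_T^{(1)}$ law incurs the $O(T^{-1/2})$ (or $O(T^{-1})$) Kolmogorov error of \Cref{thm:dp-kolmogorov-bound}, which is outside this statement.
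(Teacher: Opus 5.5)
Your proposal is correct and matches the paper's argument. On the event $\{\theta^\star\in C_\alpha\}$ you have $\inf_{\theta\in C_\alpha}\varepsilon_\theta(\delta)\le\varepsilon_{\theta^\star}(\delta)=\varepsilon^\star$, so the coverage $\Pr[\theta^\star\in C_\alpha]\ge 1-\alpha$ carries over directly to the output. The paper also takes the coverage of $C_\alpha$ as a hypothesis, so your explicit Bonferroni and joint-pivot constructions are a supplement rather than a required step.
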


This directly follows from the fact that on event $\{\theta^\star\in C_\alpha\}$, the output of \Cref{alg:auditor} satisfies
\begin{align*}
    \widehat{\varepsilon}_{\lb}(\alpha) = \inf_{\theta\in C_\alpha}\varepsilon_\theta(\delta)
    \leq \varepsilon_{\theta^\star}(\delta) = \varepsilon^\star .
\end{align*}

\noindent\textbf{Constructing $C_\alpha$.} In our implementation, $C_\alpha$ can be instantiated using standard confidence-region constructions for Gaussian parameters. A simple finite-sample choice is the Bonferroni rectangle obtained from the usual Student-$t$ confidence intervals for the means and chi-squared confidence intervals for the variances, applied separately to the absent- and present-canary score samples~\cite{casella2002statistical}. This construction is valid under the Gaussian score model, but can be conservative because it treats the four coordinates of $\theta^\star$ separately.

As a less conservative alternative, one can use a bootstrap ellipsoid based on the joint covariance of the estimator $ 
    \widehat\theta = (\widehat\mu_0,\widehat\sigma_0,\widehat\mu_1,\widehat\sigma_1)$,
where the covariance is estimated from bootstrap resamples of the canary scores~\cite{efron1994introduction}. This ellipsoidal region captures correlations among the estimated Gaussian parameters and has asymptotic coverage under standard bootstrap regularity conditions. 

%% file: 08-experiments.tex
\section{Experiments and Results}\label{sec:exp}
We empirically validate our auditing method 
on two composition DP mechanisms that converge to  Gaussian: (i) DP-SGD with Poisson subsampling and analytic-Gaussian accounting (ii) DP-FTRL, the streaming Honaker tree composed with a scatter-feature linear classifier. We ran all our experiments on a shared HPC cluster utilizing NVIDIA H200-100GB GPUs. Our code is available at \href{https://github.com/stoneboat/dpsgd-auditbench/}{https://github.com/stoneboat/dpsgd-auditbench/}.

\subsection{Auditing for DP-SGD}
To verify the tightness of our estimated lower bounds, we follow the white-box auditing protocol established in Steinke et al.\cite{steinke2023privacy} and Mahloujifar et al.\cite{mahloujifar2025auditing}. We train a WideResNet-16-4 on CIFAR-10 (Batch size $B=4096$, Augmentation $K=16$, $T=2500$ steps). At a theoretical $\varepsilon = 8$, the model achieves $\approx 70\%$ accuracy with a negligible utility drop ($\approx 1\%$) upon the inclusion of 5,000 canaries. We follow the auditing procedure described in Algorithm \ref{alg:auditor}. We evaluate all three methods at a 95\% confidence level. While baselines suffer Bonferroni penalties from threshold searching, our method avoids thresholding entirely by extracting worst-case parameters from a continuous 95\% bootstrap ellipsoid (See Sec ~\ref{subsec:gaussian-parameter-confidence}). As illustrated in Figure \ref{fig:comparision-sweeping-eps}, our method yields significantly tighter lower bounds than existing one-run baselines (by $1-2\times$) across all $\varepsilon$ regimes. Our method closes the gap to the analytic upper bound, capturing $63\%$--$84\%$ of the theoretical limit. Crucially, these gains are achieved without modifying the underlying training or auditing pipeline, simply by leveraging the distributional information inherent in the sequential gradient observations.
\begin{figure}[t]
    \centering
    \begin{subfigure}{0.49\textwidth}
        \centering
        \includegraphics[width=\linewidth]{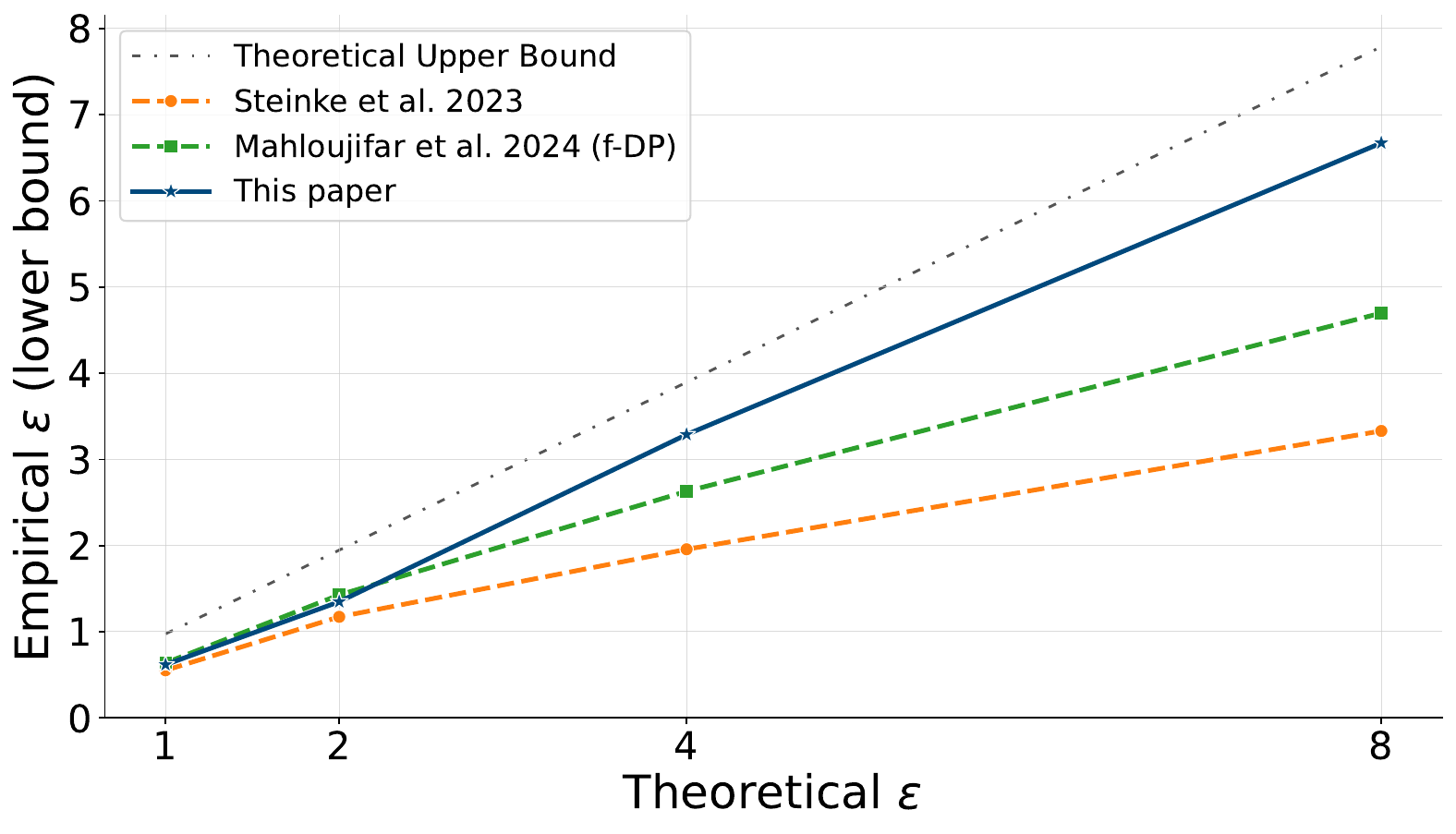}
        \caption{Auditing Comparison (CIFAR-10)}
        \label{fig:comparision-sweeping-eps}
    \end{subfigure}
    \hfill
    \begin{subfigure}{0.49\textwidth}
        \centering
        \includegraphics[width=\linewidth]{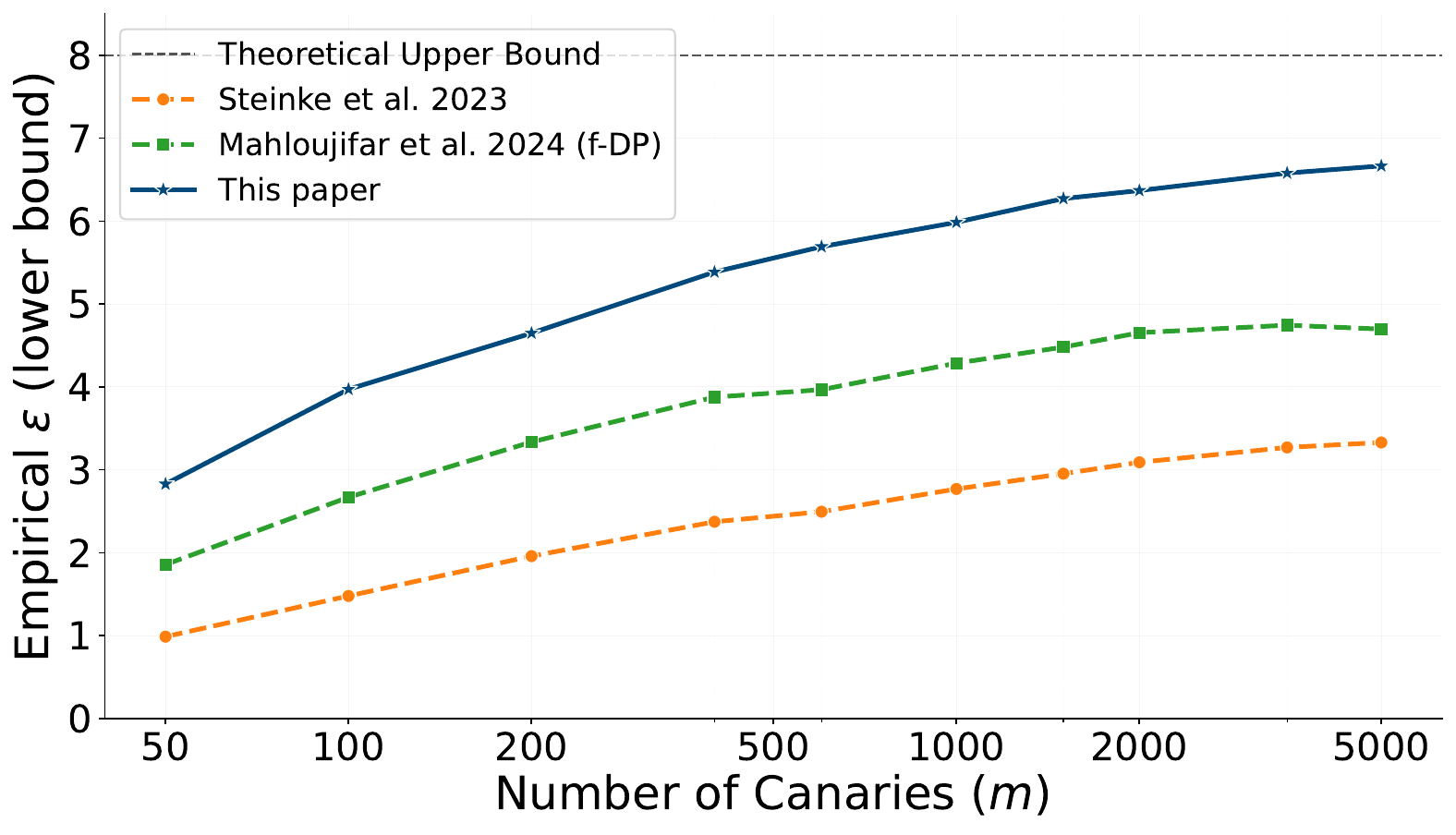}
        \caption{Impact of canary count}
        \label{fig:canary-complexity-sgd}
    \end{subfigure}
    
    \caption{DP-SGD auditing results: (a) shows empirical lower bounds across different theoretical $\epsilon$, and (b) shows the impact of canary count $m$ on auditing tightness compared to Steinke et al.\cite{steinke2023privacy} and Mahloujifar et al.\cite{mahloujifar2025auditing}. All lower bounds are computed at a 95\% confidence.}
    \label{fig:combined-auditing-plots}
\end{figure}

We also run experiments with varying number of canaries starting with the $m=100$. As shown in Figure \ref{fig:canary-complexity-sgd}, while all three methods exhibit relative stability beyond $m=1000$, the baselines converge to significantly looser bounds. Specifically, our method recovers a much larger fraction of the theoretical privacy budget, achieving a tighter lower bound with only 100 canaries that competing methods achieve with nearly 2500.

\subsection{Auditing for DP-FTRL}
To demonstrate the versatility of our framework beyond DP-SGD, we evaluate it in the federated setting using DP-FTRL\cite{kairouz2021practical}, which releases a stream of $T$ noisy prefix sums via the binary-tree (Honaker) mechanism with per-node Gaussian noise $\sigma_{\mathrm{node}}$ calibrated to $(\varepsilon, \delta)$ via Balle--Wang on the tree's combined sensitivity $\sqrt{L}\,C$, where $L = \lceil\log_2 T\rceil + 1$ is the depth of ancestor releases and $C$ is the per-sample clip norm. We train a ScatterLinear classifier on CIFAR10 ($J=2$ scattering transform \cite{andreux2020kymatio} followed by GroupNorm and a linear classifier) in single-pass schedule with $T=128$ leaves at logical batch size $B=380$ and $m=5000$ dirac canaries. We record a per-canary ancestor-sum summary by projecting the per-canary direction onto the sum of node noises along its ancestor path. By construction, this sum is exactly Gaussian under both worlds: $\mathcal{N}(0, L\sigma_{\mathrm{node}}^2)$ for canaries-out and $\mathcal{N}(LC, L\sigma_{\mathrm{node}}^2)$ for canaries-in. Consequently, the Gaussian convergence is exact, with no Berry-Esseen approximation errors, allowing for theoretically optimal, perfectly calibrated lower bounds. In Figure \ref{fig:dp-ftrlmulti-eps}, we compare our results against the one-shot auditing method of Andrew et al. \cite{andrew2024oneshot}. Their per-canary statistic is the round-wise max cosine $\tilde s_i = \max_{t \in [0,T)} \langle e_{c_i},\, G_t\rangle / \lVert G_t\rVert_2$, where $G_t$ is the released cumulative iterate at round $t$. While their approach also utilizes a closed form $(\epsilon, \delta)$ estimator using Gaussian parameters, it relies on a maximum-score statistic over training steps. In contrast, our framework leverages the entire joint distribution of sequential observations, avoiding the information loss inherent in selecting a single maximum pair. 

\begin{figure}[htbp]
  \begin{minipage}{0.49\textwidth}
    \centering
    \includegraphics[width=\linewidth]{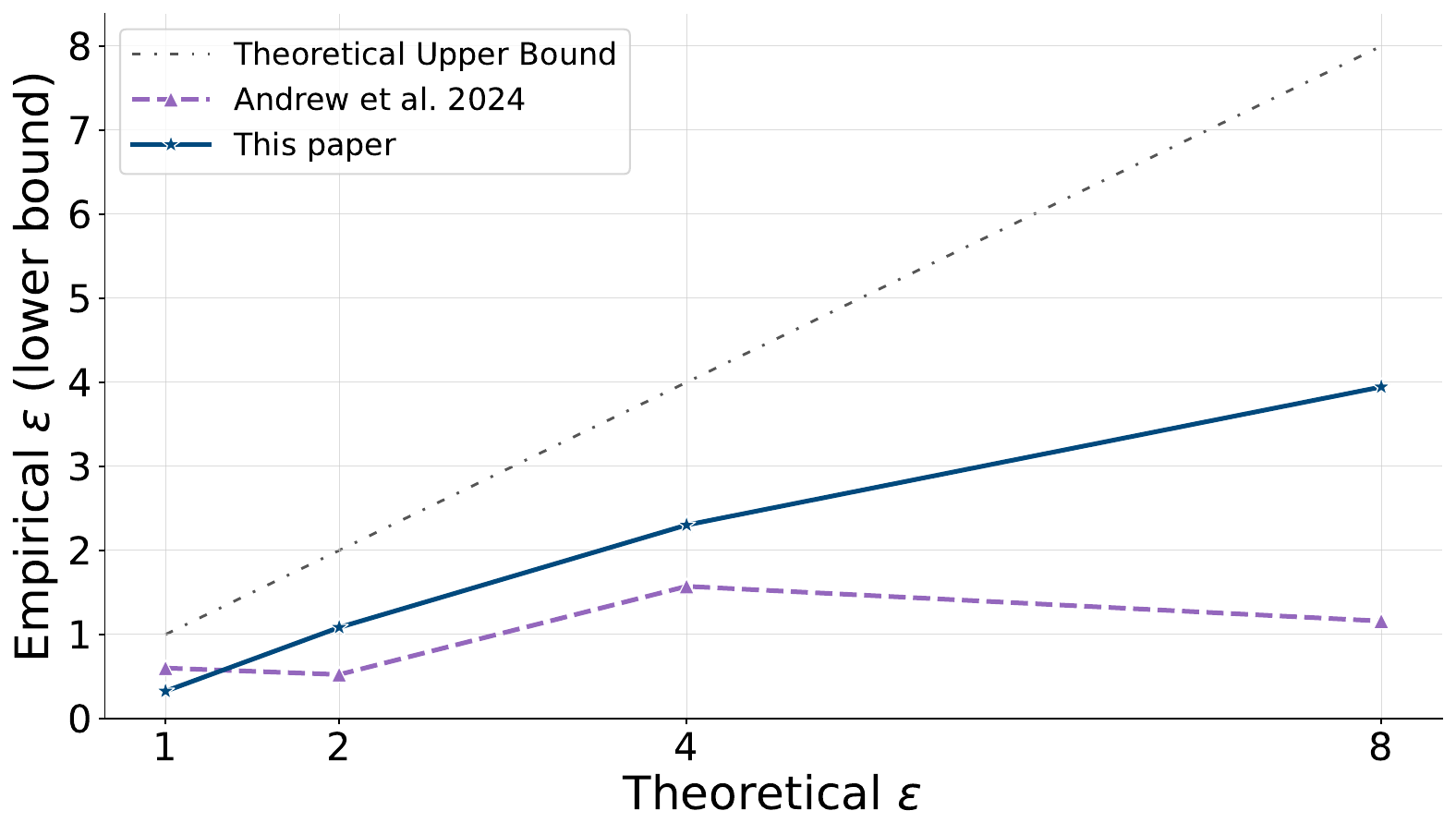}
    \caption{Auditing results for DP-FTRL across multiple $\varepsilon$ values compared to Andrew et al. \cite{andrew2024oneshot}}
    \label{fig:dp-ftrlmulti-eps}
  \end{minipage}
  \hfill
  \begin{minipage}{0.49\textwidth}
    \centering
    \includegraphics[width=\linewidth]{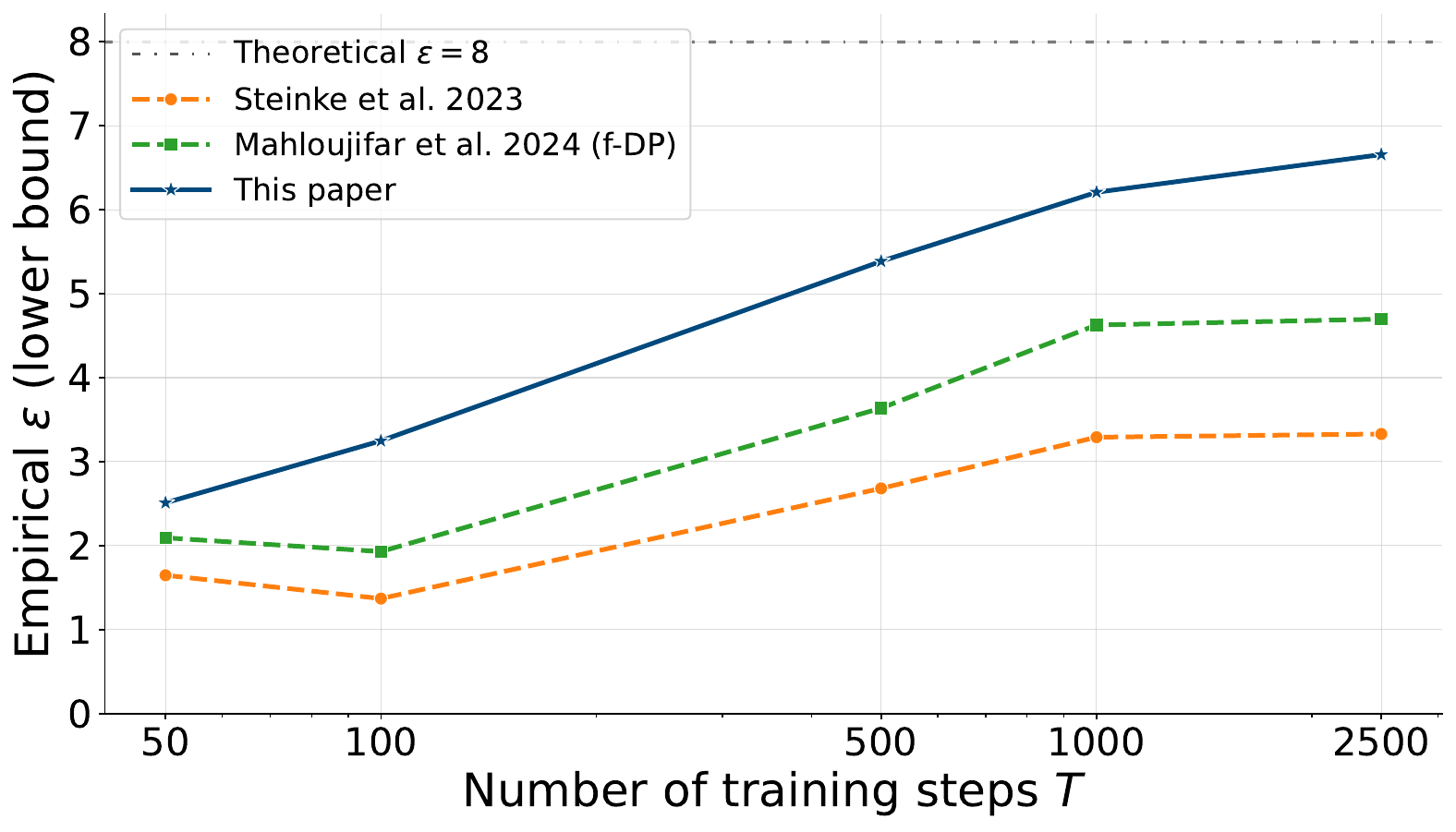}
    \caption{Ablation: Convergence of empirical $\varepsilon$ over training steps $T$.}
    \label{fig:ablation-t}
  \end{minipage}
\end{figure}

\subsection{Ablations}
\subsubsection{Convergence and Geometry in DP-SGD}
\textbf{Gaussian Approximation} We evaluate the stability of the bound across training steps $T \in \{50, 100, \dots, 2500\}$. As shown in Figure~\ref{fig:ablation-t}, the empirical bound remains robust even at low step counts, confirming that the Gaussian convergence of the gradient sums happens rapidly in the DP-SGD regime. In Figure \ref{fig:gaussianity}, we see that the histogram of distribution of estimated canary-in scores compared against the analytical Gaussian parameters is very close. We also provide a histogram plot for the canary-out scores in the Appendix \ref{app:additional-ablations}

\textbf{Independence of scores for multiple canaries} We empirically evaluate our independence assumption from Model \ref{model:canary score}. In Figure \ref{fig:independence}, we plot a Gram matrix representing the dot product (or similarity) between every pair of canary directions and observe that the off-diagonal mass concentrates near zero, indicating approximate orthogonality that we attribute to the high dimensionality of the gradient space. Second, to verify identical distribution across the canary population, we add a hexbin density against canary index in Figure~\ref{fig:identical-dist} of Appendix~\ref{app:additional-ablations}.

\begin{figure}[htbp]
    \centering
    \begin{subfigure}[t]{0.40\textwidth}
        \centering
        \includegraphics[width=\linewidth]{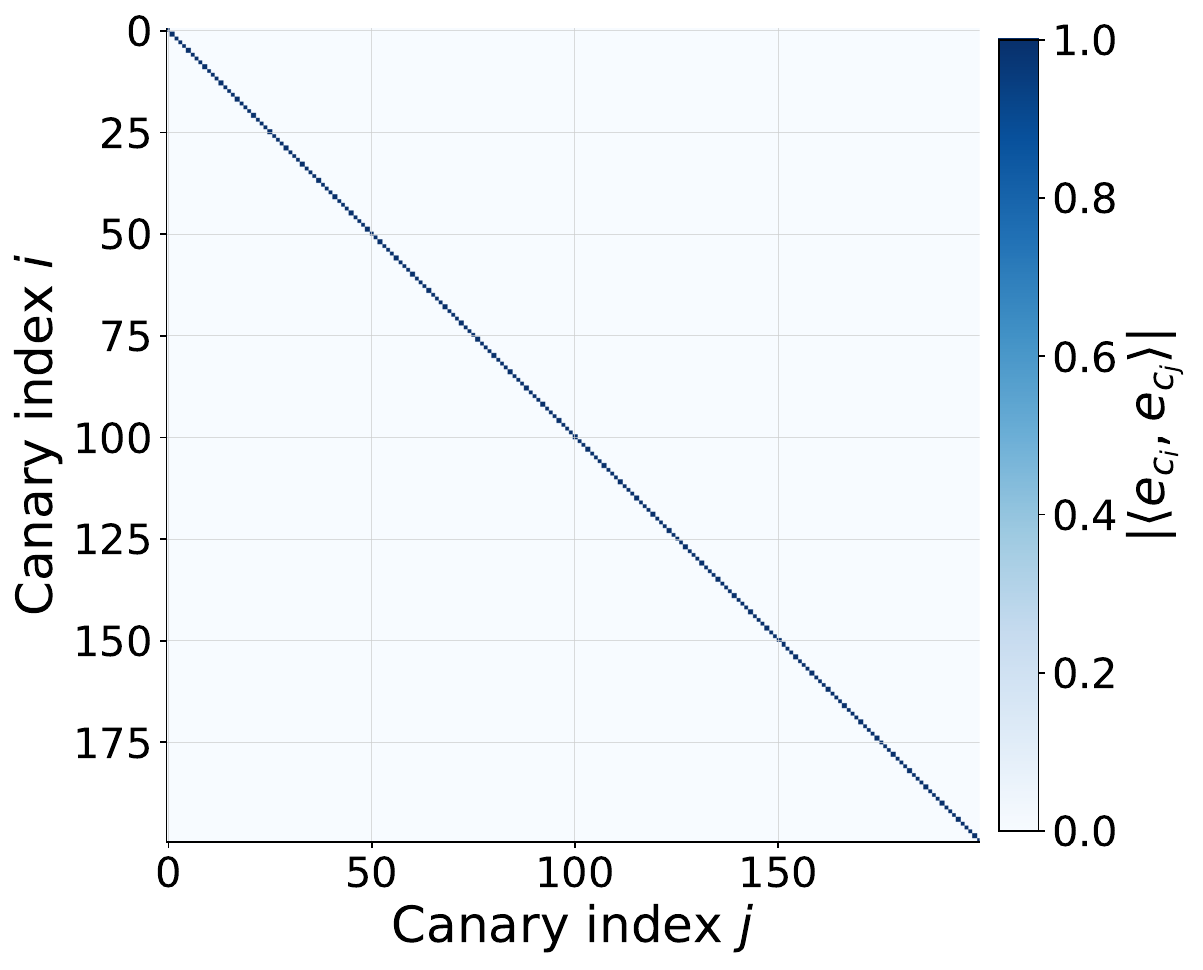}
        \caption{Gram matrix representing the dot product (or similarity) between every pair of canary directions}
        \label{fig:independence}
    \end{subfigure}
    \hfill
    \begin{subfigure}[t]{0.59\textwidth}
        \centering
        \includegraphics[width=\linewidth]{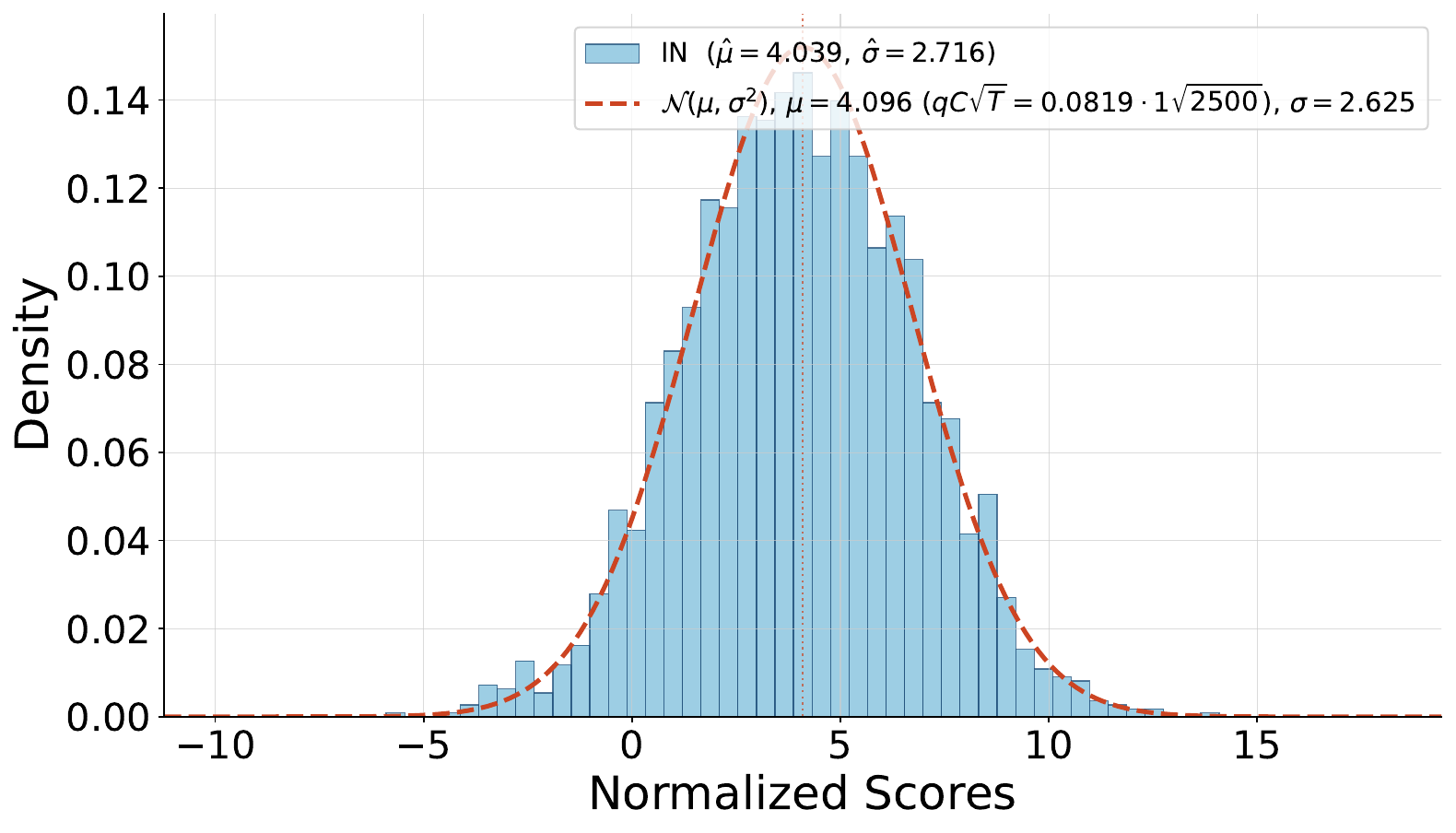}
        \caption{Histogram to visualize Gaussianity of canary-present scores against analytical parameters.}
        \label{fig:gaussianity}
    \end{subfigure}
    \caption{Ablations for DP-SGD setting at $\epsilon=8$, $m=5000$, $T=2500$.}
\end{figure}

\textbf{Confidence Region Geometry} We compare the Parametric Bonferroni and Bootstrap Ellipsoid confidence regions. As shown in the Appendix \ref{app:additional-ablations}, the Ellipsoid consistently recovers higher $\varepsilon$ values by capturing the cross-correlation between sample means and variances, which the axis-aligned Bonferroni rectangle ignores.

%% file: 09-discussion.tex
\section{Discussion}
\label{ref:discussion}

\paragraph{Conclusion.}We presented a one-run, white-box auditor for DP-SGD that departs from the binary-thresholding paradigm of prior work and it instead exploits the full distributional information in canary-aligned observations. Our key insight is that under normalization the canary score is an asymptotic gaussian, that lets us convert these scores directly into a privacy lower bound via the hockey-stick divergence between a Gaussian pair. Our experiments on CIFAR-10 DP-SGD and DP-FTRL show $1-2\times$ tighter lower bounds than existing one-run baselines. 

\paragraph{Limitations.}Our auditing framework introduces several assumptions that prior thresholding-based one-run methods do not require. We discuss them here.\\
\textit{Idealized canary-observation model:} In Model 1, we assume that the absent and present canary worlds differ only in the canary's own per-step contribution, treating the rest of the training trajectory as identical between the two worlds. We additionally assume the canary score sequences for different canaries are independent, justified by near-orthogonality of randomly sampled canary directions in high dimension. Prior thresholding-based audits \citep{steinke2023privacy, mahloujifar2025auditing} sidestep both assumptions by reducing to a binomial test on discrete events, whose validity does not depend on any distributional model. Our experiments suggest these idealizations are empirically benign in the regimes we test, but characterizing their bias rigorously remains open.\\
\textit{Asymptotic Gaussianity:} In our analysis we prove the asymptotic gaussian convergence of the sequential observations at rates faster than $O(\frac{1}{T})$. As we show in Section \ref{sec:rate-of-convergence}, this is concretely negligible for settings where the ratio of $q/T$ is sufficiently small which is relevant for most practical applications of DP-SGD. However, our method may not generalize well for other settings where those conditions are not met and the Central Limit Theory does not kick in. 

We hope this distributional perspective serves as a useful building block for future auditing scheme, particularly in settings beyond DP-SGD.

%% file: 10-ack.tex
\section{Acknowledgements}
The authors were supported in part by NSF Award No. 2531010, Halcyon Futures via the AI Security Institute, JPMorgan Chase, AnalytiXIN, and by Sunday Group, Inc. This research was supported in part through research cyberinfrastructure resources and services provided by the Partnership for an Advanced Computing Environment (PACE) at the Georgia Institute of Technology, Atlanta, Georgia, USA. 

%% file: transcript-white-box.tex
\section{White-box DP-SGD Transcript}
\label{app:transcript}
\begin{algorithm}[H]
\caption{White-box DP-SGD transcript, following the clipped-and-noised DP-SGD update of~\cite{AbadiCGMMT016}.}
\label{alg:dpsgd-transcript}
\begin{algorithmic}[1]
\Require Background database $\DB$, target canary $x^{\star}$, bit $b\in\bits$, loss function $\ell$, clipping norm $C$, subsampling rate $q$, noise multiplier $\sigma$, step sizes $(\eta_t)_{t=1}^{T}$, initial parameter $\theta_0$, and number of iterations $T$.
\Ensure White-box transcript $\mathsf{Tr}^{(b)}_T$.
\State Set $\DB^{(0)} \defin \DB$, $\DB^{(1)} \defin \DB \cup \{x^{\star}\}$, $n_b \defin |\DB^{(b)}|$.
\State Initialize $\theta_0^{(b)} \gets \theta_0$ and $\mathsf{Tr}^{(b)}_0 \gets (\theta_0^{(b)})$.
\For{$t=1,2,\ldots,T$}
    \For{each record $i\in[n_b]$}
        \State Sample $B_{t,i}^{(b)}\sim\ber{q}$ independently.
    \EndFor
    \State Set the minibatch $\cL_t^{(b)} \gets \{\DB^{(b)}[i] : B_{t,i}^{(b)}=1,\ i\in[n_b]\}.$
    \State Compute the noisy averaged gradient
    \begin{align*}
        \widetilde g_t^{(b)}
        = \frac{1}{q n_b}
        \left(
            \sum_{z\in\cL_t^{(b)}}
            \clip_C\left(\nabla_{\theta}\ell(\theta_{t-1}^{(b)};z)\right)
            + \rZ_t^{(b)}
        \right),
        \qquad
        \rZ_t^{(b)}\sim\multiNormal{0}{\sigma^2 C^2 I_d}.
    \end{align*}
    \State Update $\theta_t^{(b)} \gets \theta_{t-1}^{(b)} - \eta_t\widetilde g_t^{(b)}.$
    \State Append $(\widetilde g_t^{(b)},\theta_t^{(b)})$ to $\mathsf{Tr}^{(b)}_{t-1}$.
\EndFor
\State \Return $\mathsf{Tr}^{(b)}_T$.
\end{algorithmic}
\end{algorithm}

%% file: appendix-convergence.tex
\section{Convergence theory for sum of Gaussian mixture}\label{app:converge}

\begin{definition}[Two-component centered Gaussian mixture]
\label{def:two-comp-mixture}
A real random variable $\rX$ is a \emph{centered two-component Gaussian mixture} with parameters $(w,v,\mu,\rho,\sigma)$, written $\rX\sim\GM{w}{v}{\mu}{\rho}{\sigma}$, if its density function is
\begin{align*}
    p_{\rX}(x)
    \;=\;
    \frac{w}{\sqrt{2\pi}\,\sigma}\rExp{-(x-\mu)^2/(2\sigma^2)}
    \;+\;
    \frac{v}{\sqrt{2\pi}\,\sigma}\rExp{-(x-\rho)^2/(2\sigma^2)},
\end{align*}
where $w,v\ge 0$, $w+v=1$, $\sigma>0$, and the centering condition $w\mu + v\rho = 0$ holds, so that $\Ex{\rX}=0$.
\end{definition}

For a real random variable $\rY$, the moment generating function $\MGF{\rY}(t)\defin\Ex{\rExp{t\rY}}$ and the cumulant generating function $\CGF{\rY}(t)\defin\log \MGF{\rY}(t)$. Further define $\ell$-th cumulant of $\rY$ as
\begin{align*}
    \cum{\ell}{\rY} \defin \lim_{t\to 0}\frac{d^\ell}{dt^\ell} \CGF{\rY}(t).
\end{align*}

For a Gaussian distribution $G\sim\multiNormal{0}{\tau^2}$, $\CGF{G}(t) = \tau^2 t^2/2$, and so $\cum{2}{G} = \tau^2$ and $\cum{\ell}{G} = 0$ for all $\ell\ge 3$. 

Because the MGF of a two-component Gaussian mixture is finite in a neighborhood of $t=0$, the moment problem for $\rY$ is determinate: the cumulant sequence $\{ \cum{l}\rY \}_{l \geq 1}$ uniquely determines the law of $Y$. Hence, if a sequence of mixture-derived random variables has cumulants of order $\geq 3$ shrinking to zero, the limit law is necessarily Gaussian, and the rate of cumulant decay quantifies the rate of convergence.

\begin{theorem}[Cumulant expansion of normalized mixture sums]
\label{thm:cumulant-expansion}
Let $\rX_1,\ldots,\rX_n$ be independent with $\rX_i\sim\GM{w_i}{v_i}{\mu_i}{\rho_i}{\sigma_i}$ for each $i\in[n]$, and define
\begin{align*}
    \rS_n \defin \frac{1}{\sqrt{n}}\rSum{i}{1}{n} \rX_i.
\end{align*}
Then $\cum{0}{\rS_n}=\cum{1}{\rS_n}=0$, and
\begin{align*}
    \cum{2}{\rS_n}
    &= \frac{1}{n}\rSum{i}{1}{n}\sigma_i^2
    +\frac{1}{n}\rSum{i}{1}{n}\bigl(w_i\mu_i^2 + v_i\rho_i^2\bigr),
    \\
    \cum{3}{\rS_n}
    &= \frac{1}{n^{3/2}}\rSum{i}{1}{n}\bigl(w_i\mu_i^3 + v_i\rho_i^3\bigr),
    \\
    \cum{4}{\rS_n} &= \frac{1}{n^{2}}\rSum{i}{1}{n}\Bigl[w_i\mu_i^4 + v_i\rho_i^4 -3\bigl(w_i\mu_i^2 + v_i\rho_i^2\bigr)^2\Bigr],
\end{align*}
and more generally, for every $\ell\ge 3$,
\begin{align*}
    \cum{\ell}{\rS_n} = \frac{\alpha_\ell}{n^{\ell/2-1}},
    \qquad
    \alpha_\ell \defin \frac{1}{n}\rSum{i}{1}{n}\beta_{\ell,i},
\end{align*}
where $\beta_{\ell,i}$ is a polynomial of degree $\ell$ in $(\mu_i,\rho_i)$ with coefficients depending only on $(w_i,v_i)$. 
\end{theorem}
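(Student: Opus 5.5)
The plan is to work entirely with cumulant generating functions and use two facts: cumulants are additive over independent summands, and they are homogeneous of degree $\ell$ under scaling. For independent $\rX_i$ we have $\CGF{\rS_n}(t)=\sum_{i=1}^n \CGF{\rX_i}(t/\sqrt n)$. Differentiating $\ell$ times at $t=0$ then gives
\begin{align*}
\cum{\ell}{\rS_n}=n^{-\ell/2}\sum_{i=1}^n \cum{\ell}{\rX_i}.
\end{align*}
This reduces the statement to computing $\cum{\ell}{\rX_i}$ for a single centered two-component mixture. The claimed form $\alpha_\ell/n^{\ell/2-1}$, with $\alpha_\ell$ the average of the per-summand quantities $\beta_{\ell,i}\defin\cum{\ell}{\rX_i}$, then follows by pulling a factor $n^{-1}$ out of the sum. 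The normalization $\cum{0}{\rS_n}=\CGF{\rS_n}(0)=\log 1=0$ is immediate. $\cum{1}{\rS_n}=0$ follows from the centering condition $w_i\mu_i+v_i\rho_i=0$.

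For a single summand, I will write $\rX=\rN+\rY$, where $\rN\sim\multiNormal{0}{\sigma^2}$ is independent of a discrete variable $\rY$ taking value $\mu$ with probability $w$ and $\rho$ with probability $v$. The density in Definition~\ref{def:two-comp-mixture} is exactly the convolution of these two laws, since both components share the same $\sigma$. Consequently
\begin{align*}
\CGF{\rX}(t)=\sigma^2t^2/2+\log\bigl(w e^{t\mu}+v e^{t\rho}\bigr).
\end{align*}
The Gaussian part contributes only to $\cum{2}{}$, which yields the $\sigma_i^2$ term. All remaining cumulants are those of the two-point variable $\rY$, which has mean zero.

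For a mean-zero variable, the standard moment–cumulant relations give $\kappa_2=m_2$, $\kappa_3=m_3$ and $\kappa_4=m_4-3m_2^2$, where $m_k=w\mu^k+v\rho^k$ are the raw (equal to central) moments of $\rY$. Substituting these produces exactly the displayed formulas for $\cum{2}{\rS_n}$, $\cum{3}{\rS_n}$ and $\cum{4}{\rS_n}$.

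For general $\ell$, the cumulant is a polynomial in the moments $m_2,\dots,m_\ell$. In this polynomial every monomial $\prod_j m_{k_j}$ satisfies $\sum_j k_j=\ell$, as given by the Faà di Bruno/partition formula. Each $m_k$ is homogeneous of degree $k$ in $(\mu,\rho)$ with coefficients depending only on $(w,v)$. Hence $\beta_{\ell,i}$ is homogeneous of degree $\ell$ in $(\mu_i,\rho_i)$ with coefficients in $(w_i,v_i)$, as claimed.

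The main obstacle is mostly bookkeeping rather than analysis. One point needs care: justifying the differentiation of $\CGF{}$ at $0$. This is fine because the MGF is entire, so $\CGF{}$ is analytic near $0$, where $w e^{t\mu}+ve^{t\rho}>0$. The other point is stating the general-$\ell$ degree claim precisely through the partition formula. I would cite the classical moment–cumulant relation rather than rederive it.
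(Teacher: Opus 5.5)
Your proposal is correct and follows essentially the same route as the paper. Both work with the cumulant generating function, split off the Gaussian factor $e^{\sigma_i^2 t^2/2}$, and extract coefficients from $\log(w_i e^{\mu_i t}+v_i e^{\rho_i t})$; the paper's power-series expansion of this term via $\log(1+f_i(t))$ is exactly the moment--cumulant relation you cite. Pulling out $n^{-\ell/2}$ first via additivity and homogeneity of cumulants is cleaner bookkeeping than tracking $t/\sqrt{n}$ inside $f_i$, and it explicitly verifies the $\kappa_4$ formula and the degree-$\ell$ homogeneity of $\beta_{\ell,i}$, both of which the paper's proof leaves implicit.
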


\begin{proof}

For $\rX_i\sim\GM{w_i}{v_i}{\mu_i}{\rho_i}{\sigma_i}$, conditioning on the mixture component and using $\Ex{\rExp{t\rN}} = \rExp{m t + s^2 t^2/2}$ for $\rN\sim\multiNormal{m}{s^2}$ gives
\begin{align*}
    \MGF{\rX_i}(t)
    &= \rExp{\sigma_i^2 t^2/2}\Bigl(w_i\rExp{\mu_i t} + v_i\rExp{\rho_i t}\Bigr) \\
    \implies \MGF{\rS_n}(t)
    &= \rProd{i}{1}{n} \MGF{\rX_i}\!\left(\frac{t}{\sqrt n}\right)
    = \rExp{\frac{t^2}{2n}\rSum{i}{1}{n}\sigma_i^2}\,
    \rProd{i}{1}{n}\Bigl(w_i\rExp{\mu_i t/\sqrt n} + v_i\rExp{\rho_i t/\sqrt n}\Bigr). \tag{by independence}
\end{align*}

Taking logarithms,
\begin{align*}
    \CGF{\rS_n}(t) &= \frac{t^2}{2n}\rSum{i}{1}{n}\sigma_i^2+\rSum{i}{1}{n}\log\!\Bigl(w_i\rExp{\mu_i t/\sqrt n} + v_i\rExp{\rho_i t/\sqrt n}\Bigr). \\
\end{align*}

Expanding the exponentials as power series:
\[
     w_i\rExp{\mu_i t/\sqrt n} + v_i\rExp{\rho_i t/\sqrt n} =
    \rSum{k}{0}{\infty}\frac{t^k}{k!\, n^{k/2}}\bigl(w_i\mu_i^k + v_i\rho_i^k\bigr).
\]

The $k=0$ term is $w_i+v_i=1$, and the $k=1$ term is $\frac{t}{\sqrt n}(w_i\mu_i+v_i\rho_i)=0$ by the centering condition. Hence we can rewrite:
\[
    w_i\rExp{\mu_i t/\sqrt n} + v_i\rExp{\rho_i t/\sqrt n}= 1 + f_i(t),
    \qquad \text{where }
    f_i(t) \defin \rSum{k}{2}{\infty}\frac{t^k}{k!\,n^{k/2}}\bigl(w_i\mu_i^k+v_i\rho_i^k\bigr).
\]
Applying log taylor series we get:
\[
    \CGF{\rS_n}(t)= \frac{t^2}{2n}\rSum{i}{1}{n}\sigma_i^2+ \rSum{i}{1}{n}\rSum{\ell}{1}{\infty}\frac{(-1)^{\ell+1}}{\ell}\bigl[f_i(t)\bigr]^\ell.
\]

Given the above equation, we can compute $m!$ times the coefficient of $t^m$ to extract the cumulants of $\rS_n$.
\begin{itemize}
    \item Case $m=2$: Two contributions: the explicit Gaussian-noise term $\frac{t^2}{2n}\sum_i\sigma_i^2$, and the $\ell=1$, $k=2$ term of $f_i(t)$, which is $\frac{t^2}{2n}\sum_i(w_i\mu_i^2+v_i\rho_i^2)$. The $\ell\ge 2$ contributions vanish at order $t^2$, since $f_i(t)^\ell$ already starts at $t^{2\ell}$. 
    \item Case $m=3$: For $\ell\ge 2$, $f_i(t)^\ell$ starts at $t^{2\ell}\ge t^4$, contributing nothing at order $t^3$. Hence only $\ell=1$, $k=3$ contributes:
\begin{align*}
    \rSum{i}{1}{n}\frac{1}{3!\,n^{3/2}}(w_i\mu_i^3+v_i\rho_i^3).
\end{align*}
Multiplying by $3!$ gives the exact form.
    \item Case (general $m \ge 3$): The lowest-order term of $f_i(t)^\ell$ is $t^{2\ell}/n^\ell$, so only finitely many values of $\ell$ contribute to $t^m$, namely $\ell\in\{1,2,\ldots,\lfloor m/2\rfloor\}$. Each such contribution to $t^m$ carries a factor $1/n^{m/2}$ (from the $f_i(t)^\ell$ expansion at order $t^m$, where the powers of $t$ and $1/\sqrt n$ track together). The per-$i$ coefficient is therefore of the form $\beta_{\ell,i}/n^{m/2}$ for a polynomial $\beta_{\ell,i}$ in $(\mu_i,\rho_i)$, and summing over $i\in[n]$ produces a factor of $n$. After multiplication by $m!$, the cumulant takes the form $\text{(some bounded scalar)}/n^{m/2-1}$. This gives the general formula for the cumulants of a Gaussian mixture.
\end{itemize}
\end{proof}

Applying the above theorem to the case where $X_i$ are i.i.d. samples gives us the following asymptotic rate of convergence for all cumulants:

\begin{corollary}[Rate of convergence]
\label{cor:rate}
In \Cref{thm:cumulant-expansion} if the summands are i.i.d., i.e., $\rX_1,\ldots,\rX_n\iid\GM{w}{v}{\mu}{\rho}{\sigma}$ for some fixed parameters $(w,v,\mu,\rho,\sigma)$. Then $\cum{\ell}{\rS_n} = O(n^{-(\ell/2-1)})$ for every $\ell\ge 3$, with the leading non-vanishing cumulant of order $\ge 3$ determined as follows.
\begin{itemize}
    \item If $w\mu^3 + v\rho^3 \neq 0$, then
        \begin{align*}
            \cum{3}{\rS_n} = \Theta(n^{-1/2}),
        \end{align*}
    and the third cumulant is the leading non-vanishing one.
    
    \item If $w\mu^3 + v\rho^3 = 0$, then $\cum{3}{\rS_n} = 0$ and
        \begin{align*}
            \cum{4}{\rS_n} = \Theta(n^{-1}),
        \end{align*}
    and the fourth (or higher) cumulant is the leading non-vanishing one.
\end{itemize}
\end{corollary}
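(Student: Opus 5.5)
The plan is to specialize \Cref{thm:cumulant-expansion} to the i.i.d.\ case. Write $m_k \defin w\mu^k + v\rho^k$ for the per-summand raw moments of the mixture component means. With $w_i=w$, $v_i=v$, $\mu_i=\mu$, $\rho_i=\rho$ for all $i$, the general formula gives
\[
\cum{\ell}{\rS_n} = \frac{\alpha_\ell}{n^{\ell/2-1}} \quad\text{with}\quad \alpha_\ell = \beta_\ell .
\]
Here $\beta_\ell$ is a fixed polynomial in $(\mu,\rho)$ whose coefficients depend only on $(w,v)$. It is therefore a constant independent of $n$. This yields $\cum{\ell}{\rS_n}=O(n^{-(\ell/2-1)})$ for every $\ell\ge 3$ immediately.

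For the case split, use the explicit third-cumulant formula from \Cref{thm:cumulant-expansion}. In the i.i.d.\ setting it reads
\[
\cum{3}{\rS_n} = n\cdot n^{-3/2} m_3 = m_3\, n^{-1/2}.
\]
\begin{itemize}
\item If $m_3\neq 0$, then $\cum{3}{\rS_n}$ is a nonzero constant times $n^{-1/2}$, hence $\Theta(n^{-1/2})$. Every higher cumulant is $O(n^{-1})$ or smaller, so the third cumulant is the leading non-vanishing one.
\item If $m_3=0$, then $\cum{3}{\rS_n}=0$ identically. The fourth-cumulant formula gives
\[
\cum{4}{\rS_n} = \bigl(m_4-3m_2^2\bigr)\, n^{-1}.
\]
This is $O(n^{-1})$, and it is $\Theta(n^{-1})$ whenever the coefficient is nonzero. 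If the coefficient vanishes, the leading cumulant is of higher order and decays even faster. This matches the statement's phrase ``the fourth (or higher)''.
\end{itemize}

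The one step needing care is the claim that $\cum{4}{\rS_n}=\Theta(n^{-1})$ in the second case. This requires checking that $m_4-3m_2^2$ is typically nonzero. Combine the centering $w\mu+v\rho=0$ with $m_3=0$. Then $w\mu^3+v\rho^3 = w\mu(\mu^2-\rho^2) = 0$, since $v\rho=-w\mu$. So either $\mu=\rho=0$ (degenerate, all higher cumulants vanish) or $\rho=-\mu$ with $w=v=1/2$.

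In the non-degenerate case $m_2=\mu^2$ and $m_4=\mu^4$, so $m_4-3m_2^2=-2\mu^4\neq 0$. This gives a genuine $\Theta(n^{-1})$ rate. I expect this verification of the non-vanishing coefficient to be the only real obstacle; everything else is bookkeeping of powers of $n$ from the theorem.
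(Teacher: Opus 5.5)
Your proposal is correct and takes the same route as the paper, which obtains the corollary simply by specializing \Cref{thm:cumulant-expansion} to i.i.d.\ summands, so that $\alpha_\ell$ no longer depends on $n$. Your extra check that $m_3=0$ forces either $\rho=-\mu$ with $w=v=\tfrac12$ (giving $m_4-3m_2^2=-2\mu^4\neq 0$) or the degenerate case $w\mu=v\rho=0$ is not in the paper; that degenerate case also covers zero-weight components, not just $\mu=\rho=0$, and reduces the law to $\multiNormal{0}{\sigma^2}$, and your check correctly shows that the second bullet's $\Theta(n^{-1})$ claim holds exactly when this pure-Gaussian case is excluded.
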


\paragraph{Quantitative Kolmogorov-distance estimate}

The cumulant decay above is not just an asymptotic statement: it directly
controls the Kolmogorov distance between the true law $\trueLaw$ of the standardized audit statistic and its Gaussian model $\modelLaw$ via the classical Edgeworth expansion:

\begin{lemma}[Edgeworth expansion, {\cite{petrov2000classical,petrov2012sums}}]
\label{lem:edgeworth}
Let $\rY_1,\ldots,\rY_n\iid$ with mean $0$, variance $\tau^2 > 0$, and distribution satisfying Cram\'er's condition
$\limsup_{|t|\to\infty}|\Ex{e^{it\rY_1}}| < 1$. Let $F_n$ be the CDF of
$(\tau\sqrt n)^{-1}\rSum{i}{1}{n}\rY_i$. Then
\begin{align}
\label{eq:edgeworth}
F_n(x)-\Phi(x)=
-\varphi(x)\!\left[
    \frac{\gamma_1}{6\sqrt n}\,h_2(x)
    + \frac{1}{n}\!\left(
        \frac{\gamma_2}{24}\,h_3(x)
        + \frac{\gamma_1^{\,2}}{72}\,h_5(x)
    \right)
\right]
+ O\!\bigl(n^{-3/2}\bigr),
\end{align}
where $\gamma_1 \defin \cum{3}{\rY_1}/\tau^3$,
$\gamma_2\defin \cum{4}{\rY_1}/\tau^4$, $\varphi$ is the standard Gaussian
density, and $h_2(x)=x^2-1$, $h_3(x)=x^3-3x$, $h_5(x)=x^5-10x^3+15x$ are
probabilist's Hermite polynomials.
\end{lemma}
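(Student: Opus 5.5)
The plan is the classical characteristic-function argument. I would compare the characteristic function of the normalized sum with the Fourier transform of the proposed expansion, and then convert that closeness into a pointwise CDF bound through Esseen's smoothing inequality. Throughout I normalize $\tau=1$, which loses no generality since only $\gamma_1,\gamma_2$ enter. Let
\[
\psi(t)\defin \Ex{e^{it\rY_1}}, \qquad f_n(t)\defin \psi(t/\sqrt n)^n .
\]
Let $G_n(x)$ denote the right-hand side $\Phi(x)-\varphi(x)[\cdots]$ of \eqref{eq:edgeworth}. Since $\frac{d^k}{dx^k}\varphi(x)=(-1)^k h_k(x)\varphi(x)$, its Fourier--Stieltjes transform is
\[
g_n(t)=e^{-t^2/2}\Bigl(1+\frac{\kappa_3 (it)^3}{6\sqrt n}+\frac{1}{n}\Bigl[\frac{\kappa_4 (it)^4}{24}+\frac{\kappa_3^2 (it)^6}{72}\Bigr]\Bigr),
\]
where I write $\kappa_\ell\defin\cum{\ell}{\rY_1}$.

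\textbf{Step 1 (smoothing inequality).} For any $U>0$, Esseen's inequality gives
\[
\sup_x|F_n(x)-G_n(x)|\le \frac1\pi\int_{-U}^{U}\Bigl|\frac{f_n(t)-g_n(t)}{t}\Bigr|\,dt+\frac{c\,\sup_x|G_n'(x)|}{U}.
\]
I will take $U=K n^{3/2}$ with $K$ a large constant, so that the boundary term is $O(n^{-3/2})$, because $G_n'$ is bounded uniformly in $n$. I then split the integral into three ranges: $|t|\le \eta\sqrt n$, $\eta\sqrt n<|t|\le \varepsilon_0\sqrt n$, and $\varepsilon_0\sqrt n<|t|\le Kn^{3/2}$.

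\textbf{Step 2 (low frequencies).} On $|t|\le \eta\sqrt n$ I expand the cumulant generating function:
\[
n\log\psi(t/\sqrt n)=-\tfrac{t^2}{2}+\frac{\kappa_3(it)^3}{6\sqrt n}+\frac{\kappa_4(it)^4}{24 n}+r_n(t),\qquad |r_n(t)|\lesssim |t|^5 n^{-3/2}.
\]
This needs a finite fifth moment. In our application all moments exist because the mixture has a finite MGF; in general Petrov's hypotheses supply it. Exponentiating with $|e^z-1-z-z^2/2|\le |z|^3e^{|z|}$ then gives
\[
|f_n(t)-g_n(t)|\lesssim n^{-3/2}\,\mathrm{poly}(|t|)\,e^{-t^2/4},
\]
and dividing by $|t|$ and integrating yields $O(n^{-3/2})$. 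The only care needed is to pick $\eta$ small enough that $\operatorname{Re}$ of the cubic and higher terms stays below $t^2/4$. This is exactly the bookkeeping that produces the $h_2,h_3,h_5$ coefficients, since $(it)^3\leftrightarrow h_2$, $(it)^4\leftrightarrow h_3$ and $(it)^6\leftrightarrow h_5$ after integrating once.

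\textbf{Step 3 (middle frequencies).} On $\eta\sqrt n<|t|\le\varepsilon_0\sqrt n$, the Taylor bound $|\psi(s)|\le 1-s^2/4\le e^{-s^2/4}$ for small $s$ gives $|f_n(t)|\le e^{-t^2/4}\le e^{-\eta^2 n/4}$. The term $g_n$ is likewise exponentially small here, so this range contributes $O(n^{-3/2})$ after multiplying by the $\log n$ coming from $\int dt/t$.

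\textbf{Step 4 (high frequencies; the main obstacle).} The range $\varepsilon_0\sqrt n<|t|\le Kn^{3/2}$ is where Cramér's condition is essential, and I expect it to be the delicate step. The condition $\limsup_{|s|\to\infty}|\psi(s)|<1$, combined with the fact that $|\psi(s)|<1$ for every $s\neq0$ for a non-lattice law and with continuity, gives
\[
\sup_{|s|\ge\varepsilon_0}|\psi(s)|\le \rho<1 .
\]
Hence
\[
\int_{\varepsilon_0\sqrt n}^{Kn^{3/2}}\frac{|f_n(t)|}{t}\,dt\le \rho^n\log(Kn/\varepsilon_0)=O(n^{-3/2}),
\]
and the Gaussian-weighted $g_n$ part is negligible.

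\textbf{Application to our setting.} For the present paper, Cramér's condition is immediate. The summands are $\GM{\cdot}{\cdot}{\cdot}{\cdot}{\sigma}$ with $\sigma>0$, so $|\psi(s)|\le e^{-\sigma^2 s^2/2}\to0$. Moreover, the cumulants $\kappa_3,\kappa_4$ are exactly those computed in \Cref{thm:cumulant-expansion}. Summing the four contributions gives $\sup_x|F_n(x)-G_n(x)|=O(n^{-3/2})$, which is \eqref{eq:edgeworth}.
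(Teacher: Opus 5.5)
The paper gives no proof of this lemma; it just imports it from Petrov. Your argument is the classical proof behind that citation: Esseen smoothing with cutoff $U\asymp n^{3/2}$, a cumulant expansion of $n\log\psi(t/\sqrt n)$ for $|t|\le\eta\sqrt n$, the bound $|\psi(s)|\le 1-s^2/4$ near the origin, and Cram\'er's condition for $|t|\ge\varepsilon_0\sqrt n$. It is correct.

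The key checks go through. The transform $g_n$ follows correctly from $\varphi^{(k)}=(-1)^k h_k\varphi$. For $|t|\le\eta\sqrt n$, the leftover terms $ab$, $b^2$, $r_n a$, etc. are all $O(n^{-3/2}\,\mathrm{poly}(|t|))$ with polynomials of degree at least $5$, and so is the cubic remainder of $e^z$, so the $1/|t|$ singularity is harmless. The high-frequency piece is $\rho^n\log(Kn/\varepsilon_0)=O(n^{-3/2})$.

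What writing out the proof buys over the bare citation is that it exposes a hypothesis the lemma as stated omits. With only finite variance and Cram\'er's condition, $\gamma_1$ and $\gamma_2$ need not even exist. Moreover, the uniform $O(n^{-3/2})$ remainder genuinely needs something like $\mathbb{E}|Y_1|^5<\infty$ (Petrov's theorem with $k=5$); with four moments one only gets $o(1/n)$. You flagged this correctly. It is harmless for the paper, because the Gaussian-mixture summands have a finite MGF. They also satisfy Cram\'er's condition in the strong form $|\psi(s)|\le e^{-\sigma^2 s^2/2}$, which is more direct than the paper's appeal to a smooth density (Riemann--Lebesgue).

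A few cosmetic slips:
\begin{itemize}
\item In Step 3, the factor from $\int dt/t$ is $\log(\varepsilon_0/\eta)$, not $\log n$.
\item In Step 2, since you use $|e^z-1-z-z^2/2|\le|z|^3e^{|z|}$, the condition you need is $|z|\le t^2/4$, not just a bound on $\operatorname{Re} z$. Your choice of $\eta$ delivers this anyway, because $|z|\lesssim |t|^3/\sqrt n\le C\eta t^2$.
\item In Step 4, say explicitly that Cram\'er's condition excludes lattice laws, since that is what gives $|\psi(s)|<1$ for all $s\neq 0$.
\end{itemize}
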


The expansion does not require any computation beyond the cumulants of
$\auditStat$ already derived: setting $\rY_t = \widetilde\rX_t$ and
$n = T$, the Edgeworth coefficients equal the standardized cumulants of
$\auditStat$ itself,
\begin{align*}
    \frac{\gamma_1}{\sqrt T}
    = \frac{\cum{3}{\auditStat}}{\cum{2}{\auditStat}^{3/2}},
    \qquad
    \frac{\gamma_2}{T}
    = \frac{\cum{4}{\auditStat}}{\cum{2}{\auditStat}^{2}}.
\end{align*}

Using the above general statement we can derive the closed-form Kolmogorov-distance bound for the DP audit statistic:

\begin{proof}[Proof of \Cref{thm:dp-kolmogorov-bound}]
The summands in normalized distribution
$\auditStat = T^{-1/2}\rSum{t}{1}{T}\widetilde\rX_t$ satisfy
$\widetilde\rX_t\iid\GM{1-q}{q}{-qC}{(1-q)C}{\sigma}$, an i.i.d.\ centered
Gaussian mixture. Its distribution has a smooth density, so Cram\'er's
condition holds and~\Cref{lem:edgeworth} applies with
$\rY_t = \widetilde\rX_t$, $\tau^2 = \cum{2}{\widetilde\rX_t} = \sigma^2 + q(1-q)C^2$, and $n = T$. Using the cumulant formulas already derived,
\begin{align*}
    \gamma_1
    =  \frac{q(1-q)(1-2q)\,C^3}{\bigl[\sigma^2 + q(1-q)C^2\bigr]^{3/2}}.
\end{align*}
The Kolmogorov distance is invariant under common affine rescaling of
$\trueLaw$ and $\modelLaw$, so it equals the standardized
$\sup_{x\in\Real}|F_T(x)-\Phi(x)|$ from~\eqref{eq:edgeworth}. The function
$\varphi(x)(x^2-1)$ attains its maximal absolute value at $x=0$ with
$|\varphi(0)\cdot(-1)|=1/\sqrt{2\pi}$, and the next-order Edgeworth
contributions are uniformly $O(1/T)$. Hence
\begin{align*}
    \dK{\trueLaw}{\modelLaw}
    =
    \frac{|\gamma_1|}{6\sqrt T}\cdot\frac{1}{\sqrt{2\pi}}+ O(1/T)
\end{align*}

\end{proof}

For our concrete DP estimates we use tighter version of the above result, stated as \cref{cor:tail-deviation}, which we prove next:

\begin{proof}[Proof of \cref{cor:tail-deviation}]
The Edgeworth expansion \Cref{lem:edgeworth} gives the pointwise expansion
$F_T(x)-\Phi(x) = -\varphi(x)(x^2-1)\,\gamma_1/(6\sqrt T) + O(1/T)$
uniformly in $x$. By the monotonicity of $\varphi(x)(x^2-1)$ on
$|x|\ge\sqrt 3$, the supremum on $\{|x|\ge x_0\}$ is attained at $|x|=x_0$,
giving the stated identity after substituting
$\gamma_1=q(1-q)(1-2q)C^3/[\sigma^2+q(1-q)C^2]^{3/2}$.
\end{proof}

%% file: app-ablations.tex
\section{Additional Ablations}\label{app:additional-ablations}

This section provides supplementary plots referenced in Section~\ref{sec:exp}, including the canary-out score distribution, a hexbin density visualization of canary direction independence, and a comparison of confidence region constructions across varying canary counts.

\subsection{Gaussianity of Canary-Out Scores}
Complementing Figure~\ref{fig:gaussianity} in the main text, which visualizes the distribution of canary-present (in) scores, Figure~\ref{fig:gaussianity-out} shows the empirical distribution of canary-absent (out) scores against the analytical Gaussian parameters. 
\begin{figure}[htbp]
    \centering
    \includegraphics[width=0.5\linewidth]{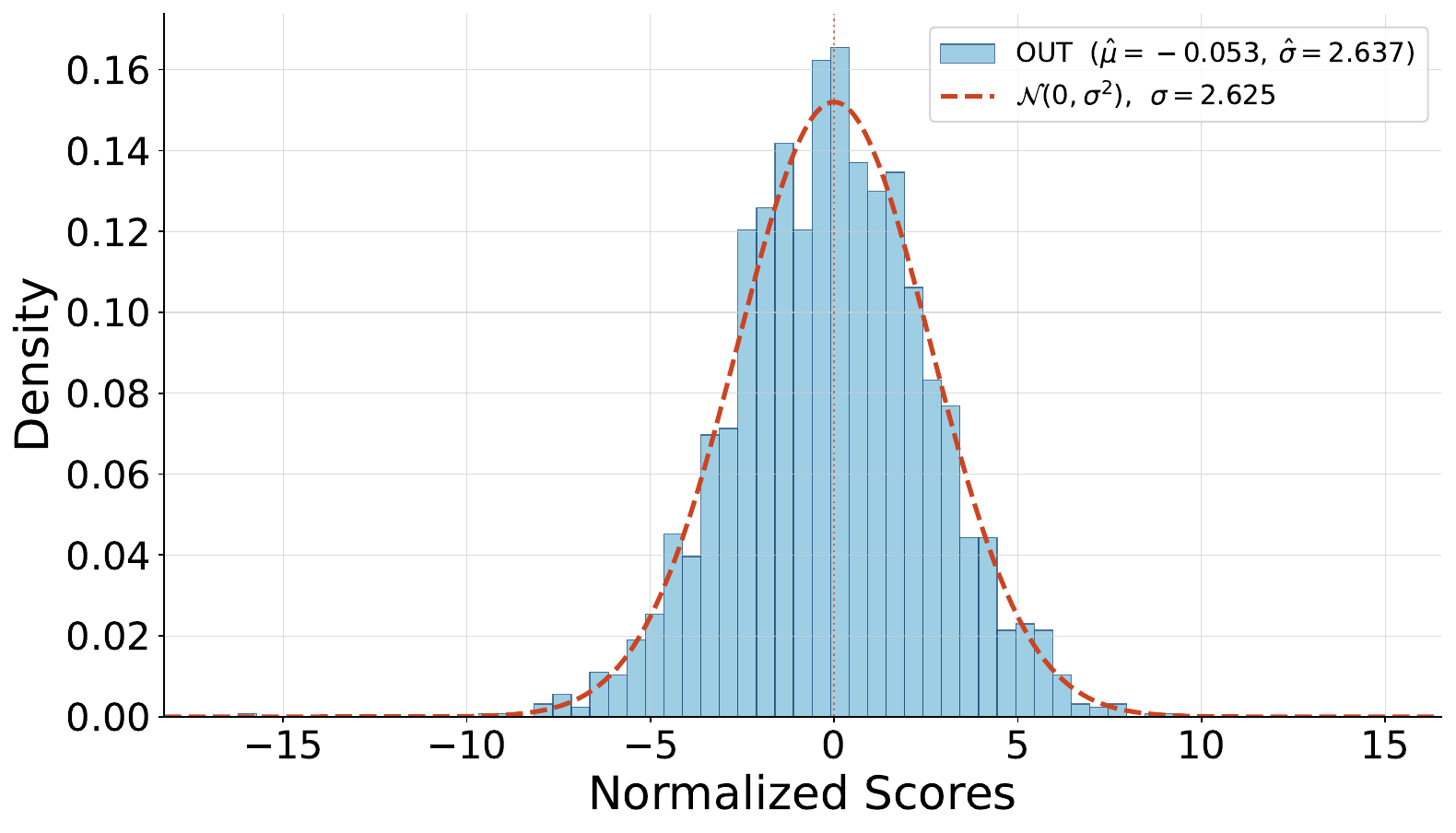}
    \caption{Histogram of canary-absent (out) scores compared against analytical Gaussian parameters for DP-SGD at $\epsilon=8$, $m=5000$, $T=2500$. The empirical distribution closely matches $\mathcal{N}(0, \sigma^2)$ predicted by theory.}
    \label{fig:gaussianity-out}
\end{figure}

\subsection{Identical Distribution Across Canary Index}
Model~\ref{model:canary score} assumes canary scores are i.i.d.\ Gaussian. The Gram matrix in Figure~\ref{fig:independence} addresses the \emph{independence} component by showing pairwise canary directions concentrate around orthogonality. Figure~\ref{fig:identical-dist} addresses the complementary \emph{identical distribution} component: it plots the standardized score $z_i = (s_i - \mu)/\sigma$ for each canary $i \in [1, m]$ as a hexbin density. If the marginal distribution of $s_i$ drifted with $i$ — for instance, due to canary ordering correlating with training dynamics or embedding-space artifacts — we would expect to see vertical banding, trends, or shifts in the central mass. Instead, the density is uniform across the canary index axis, peaks symmetrically near zero, and remains contained within $[-3, 3]$, supporting the assumption that $s_i \sim \mathcal{N}(\mu, \sigma^2)$ uniformly across the canary population.

\begin{figure}[htbp]
    \centering
    \includegraphics[width=0.5\linewidth]{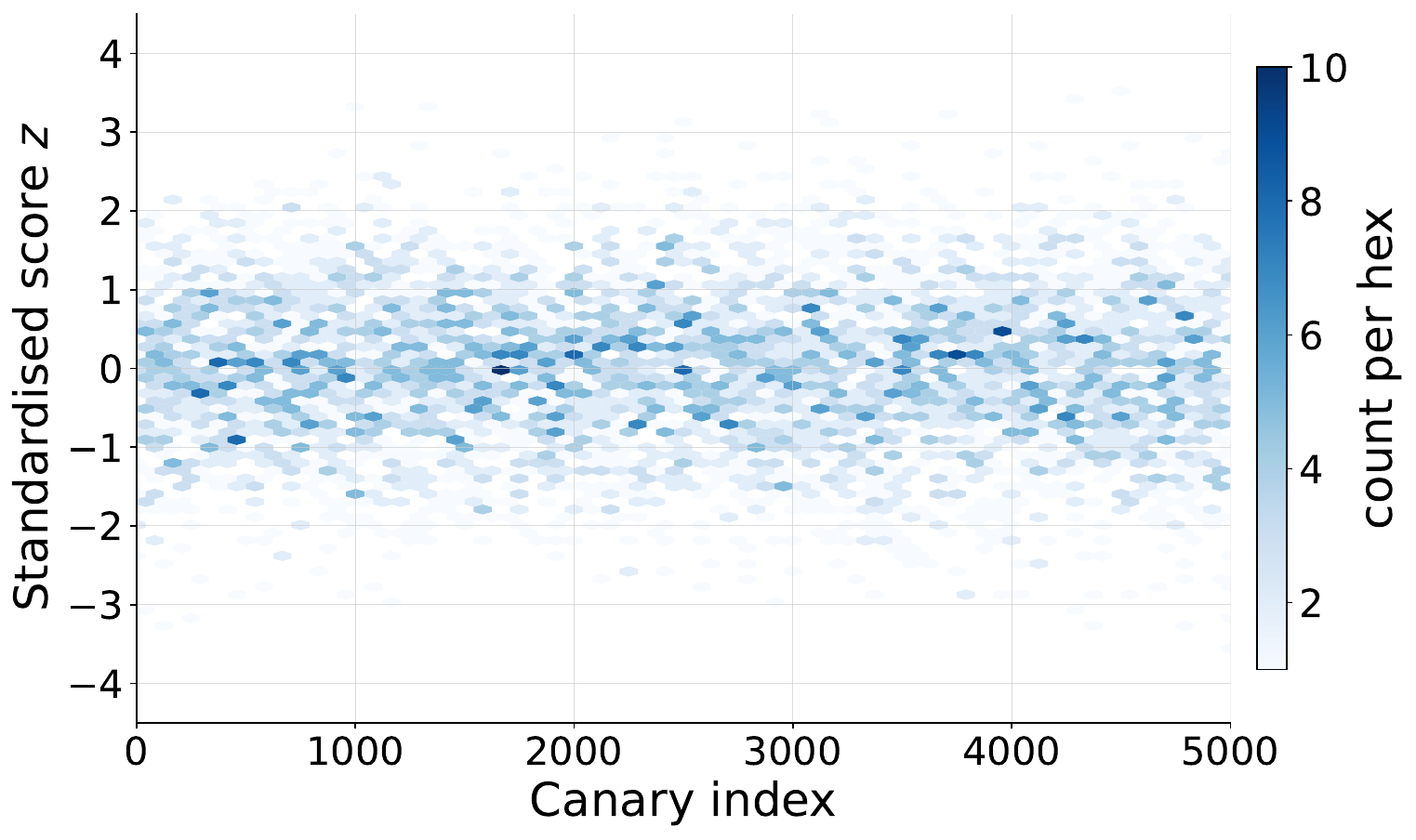}
    \caption{Hexbin density of standardized canary scores $z_i = (s_i - \mu)/\sigma$ plotted against canary index for DP-SGD at $\epsilon=8$, $m=5000$, $T=2500$. The absence of trend or banding along the index axis confirms that scores are identically distributed across canaries, complementing the independence evidence in Figure~\ref{fig:independence}.}
    \label{fig:identical-dist}
\end{figure}

\subsection{Confidence Region Geometry vs.\ Sample Complexity}
As discussed in the \emph{Confidence Region Geometry} paragraph of Section~\ref{sec:exp}, we compare the Parametric Bonferroni and Bootstrap Ellipsoid confidence regions. Figure~\ref{fig:cr-sample-complexity} shows how the recovered empirical $\varepsilon$ varies with the number of canaries $m$ under each construction. The Ellipsoid consistently recovers higher $\varepsilon$ values across all sample sizes by capturing the cross-correlation between sample means and variances, which the axis-aligned Bonferroni rectangle ignores. The gap is especially pronounced at smaller $m$, where efficient use of statistical information matters most.

\begin{figure}[htbp]
    \centering
    \includegraphics[width=0.5\linewidth]{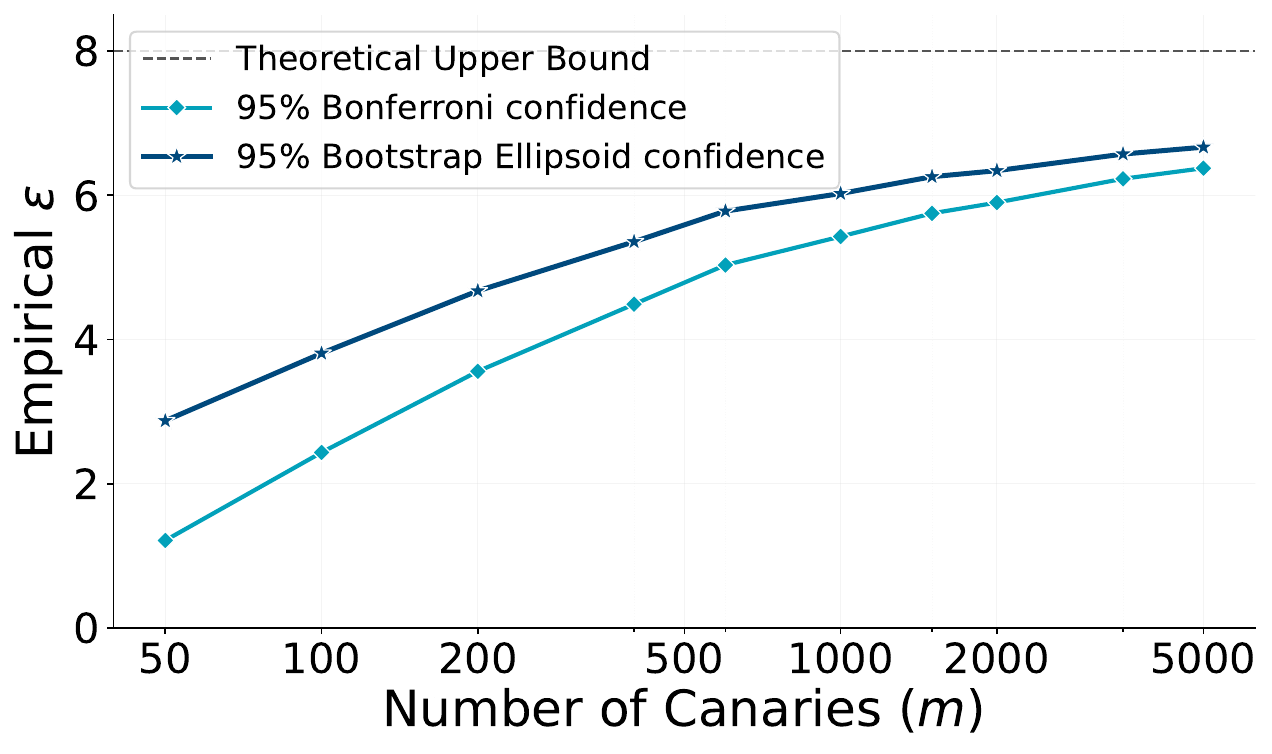}
    \caption{Comparison of empirical $\varepsilon$ recovered by the Parametric Bonferroni rectangle versus the Bootstrap Ellipsoid confidence region across varying canary counts $m$. The Ellipsoid yields tighter (higher) lower bounds at all sample sizes by exploiting the joint geometry of the mean-variance estimates.}
    \label{fig:cr-sample-complexity}
\end{figure}

%% file: references.bib
@article{dwork2014algorithmic,
  author  = {Cynthia Dwork and
             Aaron Roth},
  title   = {The Algorithmic Foundations of Differential Privacy},
  journal = {Foundations and Trends in Theoretical Computer Science},
  year    = {2014}
}

@article{jagielski2020auditing,
  title={Auditing differentially private machine learning: How private is private sgd?},
  author={Jagielski, Matthew and Ullman, Jonathan and Oprea, Alina},
  journal={Advances in Neural Information Processing Systems},
  volume={33},
  pages={22205--22216},
  year={2020}
}

@inproceedings{nasr2021adversary,
  title={Adversary instantiation: Lower bounds for differentially private machine learning},
  author={Nasr, Milad and Songi, Shuang and Thakurta, Abhradeep and Papernot, Nicolas and Carlin, Nicholas},
  booktitle={2021 IEEE Symposium on security and privacy (SP)},
  pages={866--882},
  year={2021},
  organization={IEEE}
}

@article{lu2023normal,
  author  = {Yun Lu and
             Malik Magdon{-}Ismail and
             Yu Wei and
             Vassilis Zikas},
  title   = {The Normal Distributions Indistinguishability Spectrum and Its Application to Privacy-Preserving Machine Learning},
  journal = {arXiv preprint arXiv:2309.01243},
  year    = {2023},
  note    = {Preliminary version to appear at the Theory and Practice of Differential Privacy (TPDP) 2026 workshop}
}

@article{LiMHMR15,
  author  = {Chao Li and
             Gerome Miklau and
             Michael Hay and
             Andrew McGregor and
             Vibhor Rastogi},
  title   = {The Matrix Mechanism: Optimizing Linear Counting Queries Under Differential Privacy},
  journal = {The {VLDB} Journal},
  volume  = {24},
  number  = {6},
  pages   = {757--781},
  year    = {2015},
  doi     = {10.1007/s00778-015-0391-1},
  url     = {https://doi.org/10.1007/s00778-015-0391-1}
}

@inproceedings{AbadiCGMMT016,
  author    = {Mart{\'{\i}}n Abadi and
               Andy Chu and
               Ian J. Goodfellow and
               H. Brendan McMahan and
               Ilya Mironov and
               Kunal Talwar and
               Li Zhang},
  title     = {Deep Learning with Differential Privacy},
  booktitle = {Proceedings of the 2016 {ACM} {SIGSAC} Conference on Computer and Communications Security},
  pages     = {308--318},
  year      = {2016},
  publisher = {{ACM}},
  doi       = {10.1145/2976749.2978318},
  url       = {https://doi.org/10.1145/2976749.2978318}
}

@inproceedings{lu2024eureka,
  author    = {Yun Lu and
               Malik Magdon{-}Ismail and
               Yu Wei and
               Vassilis Zikas},
  title     = {Eureka: A General Framework for Black-Box Differential Privacy Estimators},
  booktitle = {2024 IEEE Symposium on Security and Privacy (SP)},
  pages     = {913--931},
  year      = {2024},
  publisher = {IEEE}
}

@article{askin2025general,
  author        = {{\"{O}}nder Askin and
                   Holger Dette and
                   Martin Dunsche and
                   Tim Kutta and
                   Yun Lu and
                   Yu Wei and
                   Vassilis Zikas},
  title         = {General-Purpose {$f$}-{DP} Estimation and Auditing in a Black-Box Setting},
  journal       = {CoRR},
  volume        = {abs/2502.07066},
  year          = {2025},
  doi           = {10.48550/arXiv.2502.07066},
  eprint        = {2502.07066},
  archivePrefix = {arXiv},
  url           = {https://arxiv.org/abs/2502.07066}
}

@inproceedings{steinke2023privacy,
  author    = {Thomas Steinke and
               Milad Nasr and
               Matthew Jagielski},
  title     = {Privacy Auditing with One ({1}) Training Run},
  booktitle = {Advances in Neural Information Processing Systems},
  volume    = {36},
  year      = {2023},
  url       = {https://papers.nips.cc/paper_files/paper/2023/hash/9a6f6e0d6781d1cb8689192408946d73-Abstract-Conference.html}
}

@inproceedings{mahloujifar2025auditing,
  author    = {Saeed Mahloujifar and
               Luca Melis and
               Kamalika Chaudhuri},
  title     = {Auditing {$f$}-Differential Privacy in One Run},
  booktitle = {Proceedings of the 42nd International Conference on Machine Learning},
  pages     = {42615--42641},
  year      = {2025},
  volume    = {267},
  series    = {Proceedings of Machine Learning Research},
  publisher = {PMLR},
  url       = {https://proceedings.mlr.press/v267/mahloujifar25a.html}
}

@inproceedings{nasr2023tight,
  author    = {Milad Nasr and
               Jamie Hayes and
               Thomas Steinke and
               Borja Balle and
               Florian Tram{\`e}r and
               Matthew Jagielski and
               Nicholas Carlini and
               Andreas Terzis},
  title     = {Tight Auditing of Differentially Private Machine Learning},
  booktitle = {32nd USENIX Security Symposium (USENIX Security 23)},
  pages     = {1631--1648},
  year      = {2023},
  publisher = {USENIX Association},
  url       = {https://www.usenix.org/conference/usenixsecurity23/presentation/nasr}
}

@inproceedings{xiang2025bits,
  author    = {Zihang Xiang and
               Tianhao Wang and
               Di Wang},
  title     = {Privacy Audit as Bits Transmission: {(Im)possibilities} for Audit by One Run},
  booktitle = {34th USENIX Security Symposium (USENIX Security 25)},
  year      = {2025},
  publisher = {USENIX Association},
  url       = {https://www.usenix.org/conference/usenixsecurity25/presentation/xiang-zihang}
}

@article{xiang2025tight,
  author        = {Zihang Xiang and
                   Tianhao Wang and
                   Hanshen Xiao and
                   Yuan Tian and
                   Di Wang},
  title         = {Tight Privacy Audit in One Run},
  journal       = {CoRR},
  volume        = {abs/2509.08704},
  year          = {2025},
  doi           = {10.48550/arXiv.2509.08704},
  eprint        = {2509.08704},
  archivePrefix = {arXiv},
  url           = {https://arxiv.org/abs/2509.08704}
}

@inproceedings{TPCBalleBG20,
  author    = {Borja Balle and
               Gilles Barthe and
               Marco Gaboardi},
  title     = {Privacy Amplification by Subsampling: Tight Analyses via Couplings and Divergences},
  booktitle = {Advances in Neural Information Processing Systems},
  volume    = {31},
  pages     = {6280--6290},
  year      = {2018}
}

@inproceedings{TCC:DMNS06,
  author    = {Cynthia Dwork and
               Frank McSherry and
               Kobbi Nissim and
               Adam Smith},
  title     = {Calibrating Noise to Sensitivity in Private Data Analysis},
  booktitle = {Theory of Cryptography},
  series    = {Lecture Notes in Computer Science},
  volume    = {3876},
  pages     = {265--284},
  year      = {2006},
  publisher = {Springer},
  doi       = {10.1007/11681878_14},
  url       = {https://doi.org/10.1007/11681878_14}
}

@article{chan2011private,
  author    = {T.{-}H. Hubert Chan and
               Elaine Shi and
               Dawn Song},
  title     = {Private and Continual Release of Statistics},
  journal   = {ACM Transactions on Information and System Security},
  volume    = {14},
  number    = {3},
  pages     = {1--24},
  year      = {2011},
  publisher = {ACM},
  doi       = {10.1145/2043621.2043626},
  url       = {https://doi.org/10.1145/2043621.2043626}
}

@misc{dpsynthetic,
  author        = {Alexey Kurakin and
                   Natalia Ponomareva and
                   Umar Syed and
                   Liam MacDermed and
                   Andreas Terzis},
  title         = {Harnessing Large-Language Models to Generate Private Synthetic Text},
  year          = {2024},
  eprint        = {2306.01684},
  archivePrefix = {arXiv},
  primaryClass  = {cs.LG},
  url           = {https://arxiv.org/abs/2306.01684}
}

@inproceedings{USENIX:ZWLHBH21,
  author    = {Zhikun Zhang and
               Tianhao Wang and
               Ninghui Li and
               Jean Honorio and
               Michael Backes and
               Shibo He and
               Jiming Chen and
               Yang Zhang},
  title     = {{PrivSyn}: Differentially Private Data Synthesis},
  booktitle = {30th USENIX Security Symposium (USENIX Security 21)},
  year      = {2021},
  publisher = {USENIX Association},
  url       = {https://www.usenix.org/conference/usenixsecurity21/presentation/zhang-zhikun}
}

@inproceedings{cvprdpsynth,
  author    = {Reihaneh Torkzadehmahani and
               Peter Kairouz and
               Benedict Paten},
  title     = {{DP-CGAN}: Differentially Private Synthetic Data and Label Generation},
  booktitle = {2019 IEEE/CVF Conference on Computer Vision and Pattern Recognition Workshops (CVPRW)},
  pages     = {98--104},
  year      = {2019},
  publisher = {IEEE Computer Society},
  doi       = {10.1109/CVPRW.2019.00018},
  url       = {https://doi.org/10.1109/CVPRW.2019.00018}
}

@inproceedings{yoon2018pategan,
  author    = {Jinsung Yoon and
               James Jordon and
               Mihaela van der Schaar},
  title     = {{PATE-GAN}: Generating Synthetic Data with Differential Privacy Guarantees},
  booktitle = {International Conference on Learning Representations},
  year      = {2019},
  url       = {https://openreview.net/forum?id=S1zk9iRqF7}
}

@inproceedings{pmlr-v97-mckenna19a,
  author    = {Ryan McKenna and
               Daniel Sheldon and
               Gerome Miklau},
  title     = {Graphical-Model Based Estimation and Inference for Differential Privacy},
  booktitle = {Proceedings of the 36th International Conference on Machine Learning},
  pages     = {4435--4444},
  year      = {2019},
  volume    = {97},
  series    = {Proceedings of Machine Learning Research},
  publisher = {PMLR},
  url       = {https://proceedings.mlr.press/v97/mckenna19a.html}
}

@article{andreux2020kymatio,
  title={Kymatio: Scattering transforms in python},
  author={Andreux, Mathieu and Angles, Tom{\'a}s and Exarchakis, Georgios and Leonarduzzi, Roberto and Rochette, Gaspar and Thiry, Louis and Zarka, John and Mallat, St{\'e}phane and And{\'e}n, Joakim and Belilovsky, Eugene and others},
  journal={Journal of Machine Learning Research},
  volume={21},
  number={60},
  pages={1--6},
  year={2020}
}

@book{casella2002statistical,

  title={Statistical Inference},

  author={Casella, George and Berger, Roger L.},

  year={2002},

  publisher={Duxbury}

}

@book{efron1994introduction,

  title={An Introduction to the Bootstrap},

  author={Efron, Bradley and Tibshirani, Robert J.},

  year={1994},

  publisher={Chapman and Hall/CRC}

}

@inproceedings{shokri2017membership,
  title={Membership inference attacks against machine learning models},
  author={Shokri, Reza and Stronati, Marco and Song, Congzheng and Shmatikov, Vitaly},
  booktitle={2017 IEEE symposium on security and privacy (SP)},
  pages={3--18},
  year={2017},
  organization={IEEE}
}

@inproceedings{carlini2019secret,
  title={The secret sharer: Evaluating and testing unintended memorization in neural networks},
  author={Carlini, Nicholas and Liu, Chang and Erlingsson, {\'U}lfar and Kos, Jernej and Song, Dawn},
  booktitle={28th USENIX security symposium (USENIX security 19)},
  pages={267--284},
  year={2019}
}

@inproceedings{yeom2018privacy,
  title={Privacy risk in machine learning: Analyzing the connection to overfitting},
  author={Yeom, Samuel and Giacomelli, Irene and Fredrikson, Matt and Jha, Somesh},
  booktitle={2018 IEEE 31st computer security foundations symposium (CSF)},
  pages={268--282},
  year={2018},
  organization={IEEE}
}

@book{petrov2012sums,
  title={Sums of independent random variables},
  author={Petrov, Valentin V},
  year={2012},
  publisher={Springer Science \& Business Media}
}

@incollection{petrov2000classical,
  title={Classical-type limit theorems for sums of independent random variables},
  author={Petrov, Valentin V},
  booktitle={Limit theorems of probability theory},
  pages={1--24},
  year={2000},
  publisher={Springer}
}

@inproceedings{zanella2023bayesian,
  author    = {Santiago Zanella-B{\'e}guelin and Lukas Wutschitz and Shruti Tople and Ahmed Salem and Victor R{\"u}hle and Andrew Paverd and Mohammad Naseri and Boris K{\"o}pf and Daniel Jones},
  title     = {Bayesian Estimation of Differential Privacy},
  booktitle = {Proceedings of the 40th International Conference on Machine Learning (ICML)},
  year      = {2023}
}

@inproceedings{annamalai2024nearly,
  author    = {Meenatchi Sundaram Muthu Selva Annamalai and Emiliano De Cristofaro},
  title     = {Nearly Tight Black-Box Auditing of Differentially Private Machine Learning},
  booktitle = {Advances in Neural Information Processing Systems (NeurIPS)},
  year      = {2024}
}

@article{liu2025enhancing,
  title={Enhancing one-run privacy auditing with quantile regression-based membership inference},
  author={Liu, Terrance and Boglioni, Matteo and Fu, Yiwei and Hu, Shengyuan and Thaker, Pratiksha and Wu, Zhiwei Steven},
  journal={arXiv preprint arXiv:2506.15349},
  year={2025}
}

@article{keinan2025well,
  title={How Well Can Differential Privacy Be Audited in One Run?},
  author={Keinan, Amit and Shenfeld, Moshe and Ligett, Katrina},
  journal={arXiv preprint arXiv:2503.07199},
  year={2025}
}

@inproceedings{kairouz2021practical,
  title={Practical and private (deep) learning without sampling or shuffling},
  author={Kairouz, Peter and McMahan, Brendan and Song, Shuang and Thakkar, Om and Thakurta, Abhradeep and Xu, Zheng},
  booktitle={International Conference on Machine Learning},
  pages={5213--5225},
  year={2021},
  organization={PMLR}
}

@inproceedings{andrew2024oneshot,
  author    = {Galen Andrew and Peter Kairouz and Sewoong Oh and Alina Oprea and H. Brendan McMahan and Vinith M. Suriyakumar},
  title     = {One-shot Empirical Privacy Estimation for Federated Learning},
  booktitle = {The Twelfth International Conference on Learning Representations (ICLR)},
  year      = {2024}
}

@inproceedings{maddock2023canife,
  author    = {Samuel Maddock and Alexandre Sablayrolles and Pierre Stock},
  title     = {{CANIFE}: Crafting Canaries for Empirical Privacy Measurement in Federated Learning},
  booktitle = {The Eleventh International Conference on Learning Representations (ICLR)},
  year      = {2023}
}
